\newtheorem{theorem}{Proposition}
\newtheorem{lemma}{Lemma}
\newtheorem{remark}{Remark}
\newtheorem{corollary}{Corollary}
\title{Learning Diffeomorphism for Image Registration with Time-Continuous Networks using Semigroup Regularization}
\author{
  Mohammadjavad Matinkia \\
  University of Alberta \\
  Edmonton, AB, Canada\\
  \texttt{matinkia@ualberta.ca} \\
  %% examples of more authors
  \And
  Nilanjan Ray \\
  University of Alberta \\
  Edmonton, AB, Canada\\
  \texttt{nray1@ualberta.ca} \\
}
\begin{document}
\maketitle

\begin{abstract}
\large
Diffeomorphic image registration (DIR) is a fundamental task in 3D medical image analysis that seeks topology-preserving deformations between image pairs. To ensure diffeomorphism, a common approach is to model the deformation field as the flow map solution of a differential equation, which is solved using efficient schemes such as scaling and squaring along with multiple smoothness regularization terms. In this paper, we propose a novel learning-based approach for diffeomorphic 3D image registration that models diffeomorphisms in a continuous-time framework using only a single regularization term, without requiring additional integration. We exploit the semigroup property—a fundamental characteristic of flow maps—as the sole form of regularization, ensuring temporally continuous diffeomorphic flows between image pairs. Leveraging this property, we prove that our formulation directly learns the flow map solution of an ODE, ensuring continuous inverse and cycle consistencies without explicit enforcement, while eliminating additional integration schemes and regularization terms. To achieve time-continuous diffeomorphisms, we employ time-embedded UNets, an architecture commonly used in diffusion models. Our results demonstrate that modeling diffeomorphism continuously in time improves registration performance. Experimental results on four public datasets demonstrate the superiority of our model over state-of-the-art diffeomorphic methods. Additionally, comparison to several recent non-diffeomorphic deformable image registration methods shows that our method achieves competitive Dice scores while significantly improving topology preservation. (\url{https://github.com/mattkia/SGDIR/})
\end{abstract}

% keywords can be removed
% \keywords{Diffeomorphic Image Registration\and Semigroup Property \and Time-Embedde Architectures}

\section{Introduction}
\label{sec:intro}
\large
Diffeomorphic image registration (DIR) is a computational technique used to align images into a common coordinate system using a deformation field, ensuring that the transformation is smooth and invertible \cite{survey1, survey2, synthmorph}. Unlike simpler forms of image registration that might allow folding or tearing, diffeomorphic methods preserve the topological properties of the image since the mapping between corresponding points is continuous, invertible, and differentiable. This property is crucial in medical imaging, where anatomical structures need to be aligned accurately without grid folding to facilitate precise analysis and diagnosis.

Addressing the problem of diffeomorphic image registration has led to the development of various approaches, broadly categorized into traditional, instance optimization-based, and learning-based methods. Traditional methods such as \textbf{SyN} \cite{syn}, \textbf{NiftyReg} \cite{niftyreg}, \textbf{B-Splines} \cite{bspliens}, and \textbf{LDDMM} \cite{lddmm} usually start with an assumption on the model functional form and try to maximize the similarity between the moving and the fixed images while constraining the model to satisfy some regularity properties. Even though such methods can usually generate fine-detailed registrations, they suffer from relatively long computation times. Recent advances in deep neural networks have yielded instance optimization-based approaches using neural networks which alleviate the restriction of having an assumption on the model functional and replace the registration model with a neural network with more expressive power \cite{idir, nodeo, dnvf, nih, homeomorphic, opt1}. Such models have demonstrated superior performance and lower computation times compared to traditional methods. On the other hand, learning-based approaches \cite{learning-based1, learning-based2, learning-based3, learning-based4, learnin-based5, learning-based7, symnet, voxelmorph} have gained traction due to their ability to learn implicit representations of the registration process from large datasets. These methods, still leveraging deep neural network architectures can offer potentially more robust solutions, often significantly reducing inference time compared to traditional and neural optimization-based methods.

Instead of directly modeling the deformation field, most of optimization- and learning-based methods \cite{dnvf, nih, symnet} model the velocity field controlling the dynamics of the deformation and use \textit{scaling and squaring} integration scheme to reconstruct the deformation \cite{ss, ss2}. Such methods typically regularize their model with various regularization terms to directly penalize the foldings in the deformation and impose smoothness on the vector field gradients and magnitudes. The advent of time-embedded architectures and time-embedded UNets \cite{diffusion1, diffusion2, stable-diffusion} has facilitated the incorporation of the notion of time into the neural networks. Models such as \textbf{DiffuseMorph} and \textbf{DiffuseReg} \cite{diffusemorph, diffusereg} directly employ the Diffusion Model architecture to directly and continuously learn the deformation.

It is also known that the inverse consistency of a registration model plays a critical point in the behavior of the registration method especially for the medical image registration task \cite{tmsc}. To this end, various methods have incorporated cycle and inverse consistencies either into the model's architecture or in their training procedure \cite{tmsc, cyclemorph, consistency1, consistency2}.

In this paper, we present a novel DIR method, called \textbf{SGDIR (Semi-Group DIR)}, which is capable of directly generating the continuous deformation without using scaling and squaring integration and multiple regularization terms. Our method is built upon a simple yet important property of flow maps, called the \textit{semigroup property} \cite{semigroup} which governs the composition rule of flows along time. Using time-embedded UNets our model is able to retrieve the deformation at any time step instantaneously, and hence provides a smooth transition, diffeomorphic at any time step, from the moving image to the fixed image, and vice versa. We will prove that our particular semigroup regularization ensures that the model learns a deformation field that is indeed the solution to an ODE, hence, implicitly satisfying the diffeomrophic property and cycle and inverse consistencies, without using any other regularization terms including penalizing negative Jacobian determinants of the deformation. 
Our contributions can be summarized as the followings:
\begin{itemize}
    \item We introduce a novel DIR method utilizing time-embedded UNets which is capable of producing diffeomophic deformation fields in a time continuum.
    \item We introduce a simple semigroup regularization as the only regularization term and we mathematically prove that our regularization term leads the model towards learning the solution of the deformation ODE, hence, implicitly satisfying the diffeomorphic property.
    \item We remove additional integration schemes such as scaling and squaring integration and regularization terms including penalization of negative Jacobian determinants of the deformation and imposition of smoothness of vector field gradient and magnitude.
    % \item We provide theoretical properties and consequences of the semi-group regularization including how it leads to temporally-continuous diffeomorphic deformations.
    \item We provide extensive experiments on four 3D image registration datasets on brain MR scans, which demonstrate the superior accuracy of SGDIR compared to recent DIR methods. When compared with recent non-diffeomorphic deformable methods, SGDIR show significant improvement in topology preservation without sacrificing accuracy.
\end{itemize}

\section{Related Works}
\label{sec:related}
\large
\subsection{Pairwise Optimization-Based Methods}
Traditional methods find the deformation field between a pair of images by minimizing an energy functional while restricting the solution space by imposing some regularity conditions on the model. \textbf{SyN} \cite{lddmm} as a successful traditional model provides a greedy technique for a symmetric diffeomorphic deformation solution by minimizing the cross-correlation between the pair of images. \textbf{NiftyReg} \cite{niftyreg} as a powerful package for image registration considers different sets of linear and non-linear parametric models \cite{parametric} and minimizes loss functions such as Normalized Mutual Information (NMI) and Sum of Squared Differences (SSD) to maximize the alignment between the pairs of images. Traditional methods such as \textbf{Demons} and its variants \cite{demon1, demon2, demon3} are built upon optical flows. \textbf{LDDMM} \cite{lddmm} is a seminal work in flow-based registration that models the registration as the geodesic paths in the images space and is proper for large deformations while preserving the topology. \textbf{DARTEL} \cite{dartel} is also another flow-based approach which models the deformations using exponentiated Lie algebras and utilizes scaling and squaring integration to reconstruct the deformations.

Neural optimization based approaches model registration using neural networks. \textbf{IDIR} \cite{idir}, \textbf{DNVF} \cite{dnvf}, and \textbf{NePhi} \cite{nephi} are coordinate-based registration methods which utilize implicit neural representations \cite{siren}. IDIR directly learns the deformation and applies Jacobian regularization for better topology preservation and hyperelastic regularization for smooth deformation \cite{hyperelastic}, while DNVF implicitly learns the velocity field and using scaling and squaring integration, reconstructs the deformation. DNVF uses Jacobian regularization along with vector field gradient and magnitude regularization. \textbf{NODEO} \cite{nodeo} which also works with voxel coordinates, incorporates neural ODEs \cite{neuralODE} to learn the dynamics of the deformation by estimating the velocity field and finding deformation using an ODE solver.

\subsection{Learning-Based Methods}
Unlike optimization-based methods which perform pairwise optimization, learning-based methods try to learn the deformation across large training datasets. The most advantageous aspect of learning-based methods is their short inference time which is orders of magnitude shorter than that of optimization based methods. As a pioneering work, \textbf{VoxelMorph} \cite{voxelmorph} directly models the deformation using a UNet and by using localized Normalized Cross Correlation metric and smoothness regularization learns the deformation fields. \textbf{SYMNet} \cite{symnet} learns the velocity fields of forward and backward deformation and using scaling and squaring integration reconstructs the deformation. SYMNet uses Jacobian and smoothness regularizations along with minimizing the discrepancy between the forward and backward velocity fields, leading to a smoother deformation. \textbf{LapIRN} \cite{lapirn} and \textbf{PULPo} \cite{pulpo} employ multi-resolution deformations for better generalization and avoiding local minima. \textbf{cLapIRN} \cite{learning-based4} extends LapIRN and feeds the smoothness regularization factors to the layers of the network. \textbf{MS-ODE} \cite{MSODE} incorporates neural ODEs as a refinement stage by modeling the dynamics of the parameters of a registration model. \textbf{TransMorph} \cite{transmorph} introduces a general deformable image registration method by utilizing Vision Transformers, and further they customize their model for diffeomorphic registration, known as \textbf{TransMorph-diff}. \textbf{H-ViT} also uses the vision transformers along with cross-attention mechanism to produce more accurate deformations \cite{hvit}.

\textbf{DiffuseMorph} \cite{diffusemorph} utilizes denoising diffusion models to learn temporally continuous deformations. It also incorporates Jacobian and smoothness regularizations for imposing regularity on the solution. Instead of focusing on the images, \textbf{DiffuseReg} denoises the deformation field and follows the diffusion models strategy \cite{diffusereg}. However, using diffusion models could be challenging due to expensive training, maintaining visually pleasing outputs, and slow generation. We also use time-embedded UNets to learn the temporally continuous deformations. Unlike the diffusion model paradigm, we are able to output the deformation instantly at any timestep instead of building it from time step zero. Our simple design endows us with the ability to generate the deformation fields significantly faster.

\section{Preliminaries}
\label{sec:prem}
\large
\begin{figure}[!t]
    \centering
    \includegraphics[width=\linewidth]{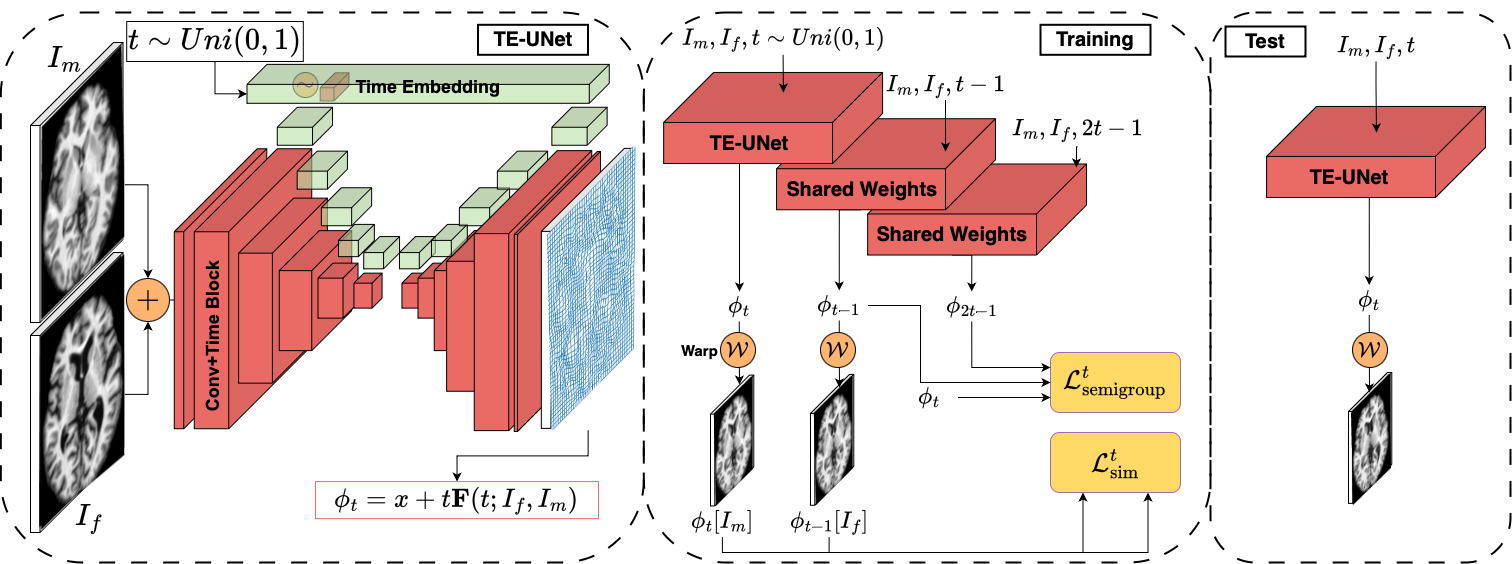}
    \caption{Schematic of SGDIR. As illustrated in the figure, there are no additional integration or extra loss functions to make the whole training procedure simple. During inference, we can query the model for any desired $t$, e.g., $t=1$ for the final deformation.}
    \label{fig:schematic}
\end{figure}

Deformable registration for 3D images seeks for a vector field $\phi:\mathbb{R}^3\longrightarrow\mathbb{R}^3$ that when applied to the moving image $I_m:\Omega\subset\mathbb{R}^3\longrightarrow\mathbb{R}$, deforms it smoothly towards the fixed image $I_f: \Omega\subset\mathbb{R}^3\longrightarrow\mathbb{R}$. The domain $\Omega\subset\mathbb{R}^3$ is usually a cubical region. In diffeomorphic image registration, we additionally require the deformation field to be an orientation-preserving diffeomorphism, which can be mathematically translated to $\phi$ having positive determinant of Jacobian $|J_\phi|$ at all points. The deformation field $\phi: \Omega \times \mathbb{R}\longrightarrow\mathbb{R}^3$ (we will use this definition going forward) can also be considered to be the time-continuous flow map solution of the following autonomous ordinary differential equation \cite{flow1, flow2, lddmm, dartel, symnet}:
\begin{gather}
\label{eq:ode}
    \left\{
    \begin{array}{ll}
        \frac{d\phi}{dt} = \mathbf{v}(\phi) & \\
        \phi_0(x) = x, & \\
    \end{array}
    \right.
\end{gather}
where $\phi_0(x)$ is short for $\phi(x, t=0)$ and $\mathbf{v}: \mathbb{R}^3\longrightarrow \mathbb{R}^3$ is a stationary velocity field that governs the dynamics of the flow. In the context of DIR, $\phi_1(x)$ is taken to be the deformation that warps $I_m$ to $I_f$. The utility of the ODE (\ref{eq:ode}) is that its solution is guaranteed to be a diffeomorphism. It is shown that considering 
\begin{gather}
    \label{eq:init}
    \phi_\frac{1}{2^N}(x) = x + \frac{\mathbf{v}}{2^N}
\end{gather}

for a relatively large $N$ (e.g., $N=8$) allows one to use the \textit{scaling and squaring} integration scheme to find $\phi_1(x)$ by iteratively applying the deformations \cite{ss, ss2}:
% \begin{align}
%     \phi_\frac{1}{2^{N-1}} &= \phi_\frac{1}{2^N} \circ \phi_\frac{1}{2^N} \nonumber\\  
%     \phi_\frac{1}{2^{N-2}} &= \phi_\frac{1}{2^{N-1}} \circ \phi_\frac{1}{2^{N-1}} \nonumber\\
%     \vdots & \nonumber\\
%     \phi_1 &= \phi_\frac{1}{2} \circ \phi_\frac{1}{2}.
% \end{align}
\begin{multline}
    \phi_{\frac{1}{2^{N-1}}} = \phi_{\frac{1}{2^N}} \circ \phi_{\frac{1}{2^N}}\Rightarrow\phi_{\frac{1}{2^{N-2}}} = \phi_{\frac{1}{2^{N-1}}} \circ \phi_{\frac{1}{2^{N-1}}}
    \Rightarrow\dots\Rightarrow\phi_1 = \phi_{1/2} \circ \phi_{1/2}.
\end{multline}

Conventional DIR methods, taking advantage of the above properties, assume the following general loss function to address the diffeomorphic registration problem.
\begin{gather}
\label{eq:loss_general}
    \mathcal{L} = \mathbb{E}_{(I_f, I_m) \sim \mathcal{D}}\left[\mathcal{L}_{\text{sim}}(I_f, \phi[I_m]) + \mathcal{L}_{\text{reg}}(\phi)\right],
\end{gather}
where $\mathcal{D}$ is the distribution of the training images, $\mathcal{L}_\text{sim}(.,.)$ is a measure of similarity between the fixed image and the warped moving image, and $\mathcal{L}_\text{reg}(\phi)$ contains restrictive regularization terms over the deformation $\phi$. Here, $\phi[I]$ means the warping of the image $I$ under the deformation field $\phi$. Starting with an estimation of the vector field $\mathbf{v}$ in Eq. \ref{eq:init}, the deformation $\phi$ (more precisely, $\phi_1$) is obtained by scaling and squaring integration, and the whole model is optimized to minimize the loss Eq. \ref{eq:loss_general}.

On the other hand, a necessary and sufficient condition of $\phi$ as the flow map solution of the ODE (\ref{eq:ode}) is that it satisfies the \textit{semigroup property}; i.e., for any time steps $t$ and $s$ it holds \cite{semigroup}:
\begin{gather}
   \phi(x,0)=x ~\text{and} ~\phi(\phi(x, s), t) = \phi(\phi(x, t), s) = \phi(x, s+t),
\end{gather}
or in short,
\begin{gather}
\label{eq:sg}
    \phi_0 = Id ~\text{and}~\phi_t\circ\phi_s = \phi_s\circ\phi_t = \phi_{t+s},
\end{gather}
where $Id$ denotes identity function. Intuitively, semigroup property implies that deforming an image $I$ up to a time $t+s$ is equivalent to first deforming the image up to time $t$ to obtain $I_t$, and then applying the deformation again up to time $s$ given that the initial condition has changed to $I_t$. Also, since $\phi_t\circ \phi_{-t}(x) = \phi_{-t}\circ\phi_t(x)  = \phi_0(x) = x$, we deduce that
\begin{gather}
\label{eq:inv}
    \phi^{-1}_t(x) = \phi_{-t}(x),
\end{gather}
for all values of $x$. Given that $\phi_t$ is a differentiable function, Eq. \ref{eq:inv} guaranties the bijectivity of $\phi_t(x)$, and hence, the diffeomorphic property of $\phi_t$. In the next section we'll show how we exploit the semigroup property of Eq. \ref{eq:sg} to circumvent the scaling and squaring integration and other regularization terms and perform the registration in a continuous time frame in contrast to former DIR methods which find the deformation at time $1$.

\section{Proposed Method}
\label{sec:method}
\large
Here we elaborate our method, dubbed as SGDIR. Let $I_f, I_m: \Omega\longrightarrow\mathbb{R}$ be the fixed and moving images, respectively, sampled from the image distribution $\mathcal{D}$. The domain $\Omega\subset\mathbb{R}^3$ is a $D\times H\times W$ cubical region. We are looking for a time-continuous deformation field $\phi: \Omega \times \mathbb{R}\longrightarrow\mathbb{R}^3$ to deform the moving image towards the fixed image, and vice versa. We assume that $\phi$ is the flow map solution of the ODE (\ref{eq:ode}), for some unknown stationary vector field $\mathbf{v}$, and therefore, it must satisfy the semigroup property of Eq. \ref{eq:sg}. If we warp $I_m$ up to time $t$ using the deformation $\phi_t$, and warp $I_f$ up to time $1-t$ using the inverse deformation $\phi^{-1}_{1-t}$, we must reach to a same point due to the continuity of the trajectory of $\phi$. Based on Eq. \ref{eq:inv}, we can write
\begin{gather}
\label{eq:cont_sim}
    \phi^{-1}_{1-t}[I_f] = \phi_{t-1}[I_f] = \phi_{t}[I_m].
\end{gather}
Eq. \ref{eq:cont_sim} allows us to define a time-continuous similarity loss. For measuring the similarity, we use the localized \textbf{N}ormalized \textbf{C}ross \textbf{C}orrelation (\textbf{NCC}) between the images \cite{voxelmorph, nodeo} (see Appendix \ref{app:ncc}).
%\textcolor{red}{During training you warp both the images. Is there a way to incorporate that here in the picture? For example, $\phi_t$ and $\phi_{t-1}$ can be there withtwo grids and they warp $I_m$ and $I_f$ respectively, etc.}
%\begin{gather}
%    NCC(I, J) = \sum_{W_p, p\in\Omega} \frac{\left(\sum_{p_i\in W_p}(I(p_i) - \mu_{I_{W_p}}(p_i))(J(p_i) - \mu_{J_{W_p}}(p_i))\right)^2}{\sum_{p_i\in W_p}(I(p_i) - \mu_{I_{W_p}}(p_i))^2\sum_{p_i\in W_p}(J(p_i) - \mu_{J_{W_p}}(p_i))^2},
%\end{gather}
%where $W_p$ is a local window around a point %$p\in\Omega$. 
In this work we use a local window of size $11$ for NCC. Using Eq. \ref{eq:cont_sim}, we introduce a time-continuous similarity loss:
\begin{align}
\label{eq:loss_sim}
    \mathcal{L}^{t}_\text{sim} &= -NCC(\phi_{t-1}[I_f], \phi_t[I_m]), && ~\forall t\in[0, 1].
\end{align}
We propose to learn the time-continuous deformation $\phi(x, t)$ using a time-embedded UNet as shown in Figure \ref{fig:schematic}. The Time-Embedded UNet (TE-UNet), frequently used in diffusion models, is capable of incorporating the notion of time and is suitable for learning flow maps. To achieve this goal, we model the deformation $\phi$ as
\begin{gather}
\label{eq:phi}
    \phi_t(x; \theta) = \phi(x, t;\theta) = x + t\mathbf{F}(x, t; I_f, I_m, \theta),
\end{gather}
for all $t\in[-1, 1]$, where $\mathbf{F}(x, t; I_f, I_m, \theta)$ is a TE-UNet with learnable parameters $\theta$ which also receives a pair of fixed and moving images. The reason behind the specific formulation of Eq. \ref{eq:phi} is that we can make sure that at $t=0$ we have $\phi_0(x) = x$, hence satisfying the identity initial condition of the ODE (\ref{eq:ode}). In order for $\phi$ to be a valid flow map, it needs to satisfy the semigroup property of Eq. \ref{eq:sg}. In practice, forcing the model to perfectly satisfy the semigroup property could be cumbersome and expensive, since we need to sample $t$ and $s$ independently and impose Eq. \ref{eq:loss_sim} twice, which requires more forward calls from the network. To tackle this issue we propose the following simplified semigroup regularization, 
\begin{gather}
\label{eq:reg}
    \mathcal{L}^{t}_\text{semigroup} = \|\phi_{2t-1} - \phi_t\circ\phi_{t-1}\|_2 + \|\phi_{2t-1} - \phi_{t-1}\circ\phi_t\|_2,
\end{gather}
for all $t\in[0, 1]$, and we show that it suffices to render $\phi$ a diffeomorphism at all time steps $t\in[-1, 1]$. In Eq. \ref{eq:reg}, $\|.\|_2$ denotes the $L_2$-norm, and forces the model to satisfy the semigroup property for $t$ and $s=t-1$. The second term in Eq. \ref{eq:reg} is added to impose symmetry and is essential to hold the semi-group property. Similar to all DIR methods we implement the composition operation by trilinear interpolation of grids over each other \cite{symnet, dnvf, lapirn, learning-based4}. From Eq. \ref{eq:reg}, we can say that a model that minimizes Eq. \ref{eq:reg}, will satisfy the following rules
\begin{gather}
\label{eq:composition}
    \phi_{2t-1} = \phi_t\circ\phi_{t-1},~\phi_{2t-1} = \phi_{t-1}\circ\phi_t.
\end{gather}

\begin{theorem}
\label{prop}
    The deformation $\phi(x, t)$ that satisfies the composition rule of Eq. \ref{eq:composition} and $\phi(x, 0) = x$ is an exponential map, or equivalently, it is a one-parameter diffeomorphism solving an autonomous ODE (\ref{eq:ode})(see \cite{biagi2018introduction}).
\end{theorem}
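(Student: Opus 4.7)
The plan is to show that Eq. \ref{eq:composition}, together with $\phi_0 = Id$ and the smoothness in $t$ inherent in the parametrization Eq. \ref{eq:phi}, forces $\phi_t$ to be a one-parameter group of diffeomorphisms, hence the time-$t$ flow of the autonomous vector field $\mathbf{v}(x) := \partial_t \phi_t(x)|_{t=0}$ appearing in ODE (\ref{eq:ode}).

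First I would settle invertibility. Setting $t = 1/2$ in Eq. \ref{eq:composition} and using $\phi_0 = Id$ gives $\phi_{1/2} \circ \phi_{-1/2} = \phi_{-1/2} \circ \phi_{1/2} = Id$, so $\phi_{1/2}$ is already a diffeomorphism with inverse $\phi_{-1/2}$. Iterating the composition rule on the subintervals produced by dyadic subdivision, one inductively identifies $\phi_{-t} = \phi_t^{-1}$ on a dense subset of $t \in [-1,1]$ and then passes to all $t$ by continuity of $\phi$.

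Next I would derive the full one-parameter group law $\phi_{s+t} = \phi_s \circ \phi_t$. The reparametrization $u = t$, $v = t - 1$ recasts Eq. \ref{eq:composition} as $\phi_{u+v} = \phi_u \circ \phi_v = \phi_v \circ \phi_u$ whenever $u - v = 1$, so the desired identity, together with the commutativity $[\phi_u, \phi_v] = 0$, already holds along this diagonal line in $(u,v)$-space. To bootstrap to arbitrary pairs $(s,t)$, I would recursively unfold $\phi_s = \phi_{(s+1)/2} \circ \phi_{(s-1)/2}$ (and likewise $\phi_t$) down to dyadic time scales, use the induced commutativity at each scale to rearrange the resulting nested composition, and match it to the analogous unfolding of $\phi_{s+t}$; continuity in $t$ then closes the recursion. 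Once the full semigroup holds, differentiating $\phi_{s+t} = \phi_s \circ \phi_t$ at $s = 0$ yields $\partial_t \phi_t(x) = \mathbf{v}(\phi_t(x))$ with $\mathbf{v}$ independent of $t$, which is exactly the autonomous ODE (\ref{eq:ode}); combined with the invertibility $\phi_t^{-1} = \phi_{-t}$ established earlier, this certifies $\phi_t$ as a one-parameter family of diffeomorphisms, i.e., the exponential map of $\mathbf{v}$.

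I expect the bootstrap step to be the main obstacle: Eq. \ref{eq:composition} only constrains $\phi_u \circ \phi_v$ along the line $u - v = 1$, so lifting this to the full two-dimensional semigroup law requires carefully interleaving the composition rule with the commutativity it induces and invoking continuity to pass to the limit. The differentiation at $s = 0$ that identifies $\mathbf{v}$, and the passage from the group law to the autonomous ODE, are routine once this is done.
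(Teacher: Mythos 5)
Your opening and closing moves are sound: the invertibility argument via dyadic expansion of the identity is essentially the paper's Lemma \ref{lem}, and differentiating the group law at $s=0$ to recover the autonomous ODE (\ref{eq:ode}) is a clean (indeed more explicit than the paper's) way to finish \emph{once the additive law is in hand}. The genuine gap is exactly the step you yourself flag as the main obstacle, and the mechanism you sketch for it cannot work as stated. The dyadic unfolding of $\phi_s$ produces factors at times $\frac{s+j}{2^n}$ (odd $j$), that of $\phi_t$ at times $\frac{t+j'}{2^n}$, and that of $\phi_{s+t}$ at times $\frac{s+t+j''}{2^n}$; for generic $s,t$ these three grids are mutually disjoint, so there is nothing to match factor by factor. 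Worse, the only commutativity Eq.~\ref{eq:composition} ever supplies is between factors whose time indices differ by exactly $1$, whereas your rearrangement needs to commute and merge factors such as $\phi_{(s-1)/2}$ and $\phi_{(t+1)/2}$, whose index difference is $\frac{s-t}{2}-1\neq 1$ in general; neither the composition rule nor its induced commutativity applies to such pairs. The same defect undercuts your claim that $\phi_{-t}=\phi_t^{-1}$ can be obtained ``inductively'' on a dense set before the additive law is known: the pair $(t,-t)$ has index difference $2t\neq 1$, so the rule never composes them directly (the paper's Lemma \ref{lem} accordingly claims only invertibility of each $\phi_t$, not the identity of its inverse), and assuming it amounts to assuming what the bootstrap is supposed to prove.

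What your outline is missing is the paper's key intermediate idea: \emph{closure under composition}. The paper first argues (Corollary \ref{cor}) that for any $s,t$ the composition $\phi_s\circ\phi_t$ equals $\phi_{f(s,t)}$ for some scalar function $f$, which collapses the two-dimensional problem to identifying $f$. It then proves a purely combinatorial adjacency fact (Lemma \ref{lem2}): any two distinct dyadic grid factors appear next to each other in at least one of the $2^{n(n+1)/2}$ expansions of the identity in Eq.~\ref{eq:special}, and consistency of evaluating $f$ across all these expansions forces $f(p,q)=p+q$ on the dyadic grid; density and continuity then give $\phi_s\circ\phi_t=\phi_{s+t}$ everywhere, with the diagonal case $\phi_s\circ\phi_s=\phi_{2s}$ recovered by a limit $t\to s$. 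Without a closure statement of this kind, your unfold--rearrange--match strategy has no operation available to merge factors whose indices do not differ by $1$, so the recursion you describe cannot close, and the proposal as written does not constitute a proof.
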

\begin{proof}
    See Appendix \ref{app:proof}.
    % See Appendix.
\end{proof}

Finally, we can propose the following loss function which is in concordance with Eq. \ref{eq:loss_general}:
\begin{gather}
\label{eq:loss}
    \mathcal{L} = \mathbb{E}_{(I_f, I_m)\sim\mathcal{D}, t\sim Uni(0,1)}\left[\mathcal{L}^{t}_\text{sim} + \lambda \mathcal{L}^{t}_\text{semigroup}\right],
\end{gather}
where $Uni(0,1)$ is the uniform distribution on $[0,1]$ and $\mathcal{L}^{t}_\text{sim}$ and $\mathcal{L}^{t}_\text{semigroup}$ are defined as in Eq. \ref{eq:loss_sim} and Eq. \ref{eq:reg}, respectively. $\lambda$ is the regularization factor controlling the level of diffeomorphism. In the training phase, we randomly sample a pair of images to be registered along with a time step uniformly taken from the interval $[0, 1]$ to minimize Eq. \ref{eq:loss}. The trained model is capable of warping either $I_f$ or $I_m$ towards the other at any time step. Thus, the model is more versatile compared to the most previous works that provide the deformation in only one direction.

Moreover, note that since the implementation of $\phi_t$ as explained above has an inverse for any $t\in[-1,1]$ the network actually learns a \textbf{continuous group structure} of deformations over the interval $[-1, 1]$.

It is also worth mentioning that some prior works \cite{consistency1, consistency2, cyclemorph} enforce forward-backward deformation consistency as a regularization term by constructing a time-independent forward deformation only at the trajectory endpoint ($\phi_1$), while explicitly modeling the backward deformation ($\phi_1^{-1}$). In contrast, SGDIR allows evaluating the deformation field $\phi_t$ at arbitrary time steps, without separately modeling the inverse. Consistency between forward and backward deformations, as we prove, is obtained as a natural result of semigroup regularization, 
% rather than through direct imposition. 
% LDDMM \cite{lddmm} models a time-dependent deformation using discrete time steps as opposed to our continuous time steps and uses forward-backward calculations to compute the velocity field explicitly. In contrast, we do not compute any velocity field explicitly.

\begin{figure*}[!t]
    \centering
    \includegraphics[width=\textwidth]{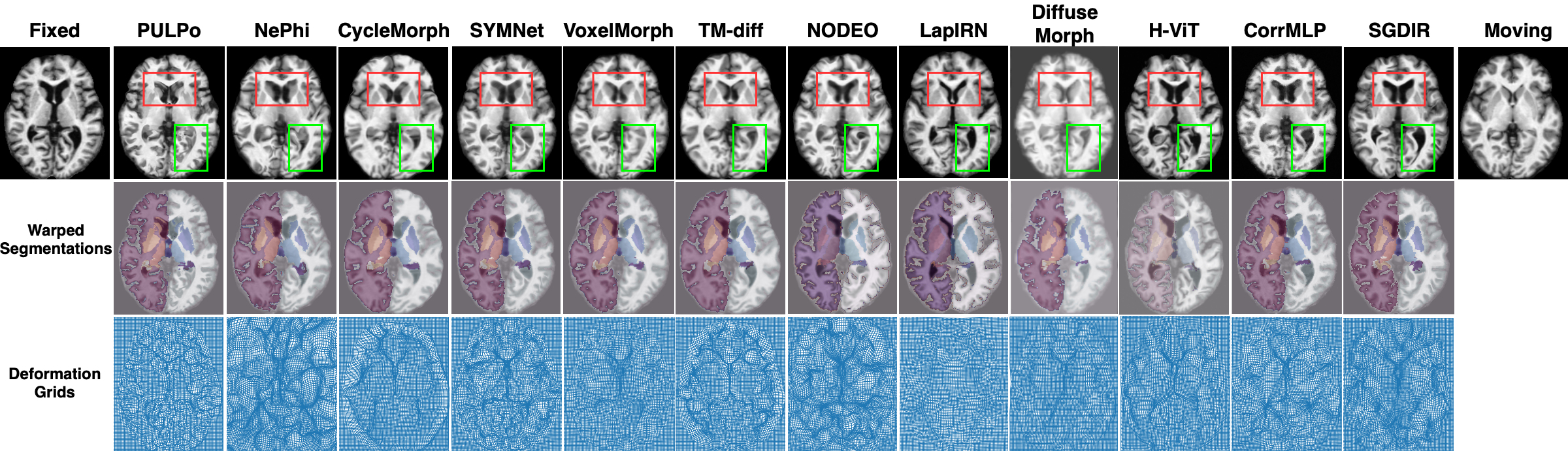}
    \caption{Qualitative comparison of SGDIR with other methods on the OASIS dataset. (visually looks better in color and zoomed in.)}
    \label{fig:comparisons}
\end{figure*}

\section{Experiments}
\label{sec:exp}
\large
\subsection{Datasets and Metrics}
\label{sec:datasets}

To evaluate SGDIR and compare its performance with other methods, we consider four publicly available datasets on 3D MR brain scans widely used in the literature, namely \textbf{OASIS} \cite{oasis}, \textbf{CANDI} \cite{candi}, \textbf{LPBA40} \cite{lpba}, and \textbf{IXI} \footnote{\url{https://brain-development.org/ixi-dataset/}}. To evaluate the performance of the model and for quantitative comparisons we use Dice score, 95th percentile Hausdorff Distance (HD95), and the percentage of negative Jacobian determinants. We provide the details of datasets, data preparations and evaluation metrics in Appendices \ref{app:data} and \ref{app:metrics}.

\subsection{Experimental Settings}
\label{sec:settings}
% In the OASIS dataset we use the first 256 subjects for training the model, the next 50 subjects for the validation and the rest of the subjects for the test set. 
In the OASIS dataset, out of 416 subjects, we use 256, 60, 100 subjects for training, validation, and testing, respectively. By random sampling we generate 1000 pairs for training, 100 pairs for validation and 1000 pairs for test. 
% In the CANDI dataset, we mix all the subgroups and then use 80 subjects for training, 11 subjects for validation, and the rest for the test set. 
In the CANDI dataset, out of 117 subjects, we use 80, 11, 26 subjects for training, validation, and testing, respectively.
We generate 400 pairs for training, 25 pairs for validation, and 300 pairs for test. For the LPBA40 dataset 190 pairs from 20 subjects are used as the training set, 10 pairs from 5 subjects are used as the validation set and 105 pairs from 15 subjects are used for the test set. Also, for the IXI dataset, out of 581 subjects, we use 450, 50, 81 subjects for training, validation, and testing, respectively, from which we create 1500, 100, 300 random pairs for training, validation, and test. 
We compare the performance of our model with a traditional optimization-based method, \textbf{SyN} \cite{syn}, as one of the most successful methods for diffeomorphic image registration, learning-based methods including \textbf{SYMNet} \cite{symnet}, \textbf{VoxelMorph} \cite{voxelmorph}, \textbf{LapIRN} \cite{lapirn}, \textbf{DiffuseMorph} \cite{diffusemorph}, \textbf{PULPo} \cite{pulpo}, \textbf{TransMorph-diff (TM-diff)} \cite{transmorph}, and optimization-based methods such as \textbf{IDIR} \cite{idir}, \textbf{NODEO} \cite{nodeo}, \textbf{NePhi} \cite{nephi}, \textbf{DNVF} and its variant \textbf{Cas-DNVF} \cite{dnvf}. We also compare our method with several non-diffeomorphic deformable methods including \textbf{H-ViT} \cite{hvit}, \textbf{CorrMLP} \cite{corrmlp}, \textbf{CycleMorph} \cite{cyclemorph}, and \textbf{TransMatch} \cite{transmatch}. For SyN, we used the implementation in the DIPY package \cite{dipy} and used cross-correlation as the loss function following the convention of the VoxelMorph. For other methods, except for DNVF/Cas-DNVF whose implementation was not available at the time of writing this paper, we used the codes available in their corresponding official repositories. The architecture of the UNet along with the time embedding modules is shown in Figure \ref{fig:schematic}. We have provided our full implementation setup in Appendix \ref{app:impl}. 

\subsection{Results}
Tables \ref{tab:dice}, \ref{tab:dice2}, \ref{tab:dice3}, \ref{tab:dice5} provide a comprehensive comparison of the performance of SGDIR with other aforementioned methods on OASIS, CANDI, LPBA40, and IXI, respectively. These tables provide the performance of models in terms of Dice score, HD95 metric, and the percentage of negative Jacobian determinants. All the values are expressed as their average over the test set along with their standard deviation. For DNVF and Cas-DNVF due to unavailability of the implementation the results are directly taken from the corresponding paper. We consider the percentage of negative Jacobian determinants smaller than $10^{-5}\%$ negligible and show it as $\approx 0$ in the tables. Also, Figure \ref{fig:comparisons} illustrates the qualitative comparison between our model and several other models on the OASIS dataset in terms of the warped image, warped segmentations, and the deformation grids. More visual results can be found in Appendix \ref{app:visuals}.
% \ref{fig:dirlab} illustrates the visual comparison between SGDIR and IDIR over the DIRLAB dataset. More visual results can be found in Appendix \ref{app:visuals}.

\begin{table}
    \centering
    \begin{minipage}[t]{0.49\linewidth}
        \caption{\small Comparison of the performance of SGDIR with other methods on the OASIS dataset. The values inside the parenthesis indicate the standard deviation.}
        \label{tab:dice}
        \centering
        \resizebox{\columnwidth}{!}{
        \begin{tabular}{lcccc}
          \toprule
          Category & Model & Dice ($\uparrow$) & HD95 ($\downarrow$) &   $|\text{det}(J_\phi) < 0|$ ($\downarrow$)\\
          \midrule
          Traditional & SyN & 73.14 (1.32) & 3.01 (0.28) & 0.0531\%   (0.0241)\\
          \midrule
          \multirow{3}{*}{Learning} & VoxelMorph & 75.93 (1.44) & 2.31   (0.24) & 0.0026\% (0.0017)\\
           & SYMNet & 79.39 (1.74) & 1.93 (0.37) & 0.0043\% (0.0017)\\
           & LapIRN & 78.80 (1.92) & 1.87 (0.23) & 0.0011\% (0.0003)\\
           & DiffuseMorph & 76.23 (2.23) & 2.34 (0.44) & 0.0551\%   (0.0131)\\
           & PULPo & 77.36 (1.48) & 2.05 (0.32) & 0.0137\% (0.0085) \\
           & CorrMLP & 80.78 (1.33) & 1.67 (0.33) & 0.1162\% (0.0310)\\
           & H-ViT & 81.14 (1.31) & 1.53 (0.26) & 0.31\% (0.012)\\
           & TransMatch & 80.02 (1.14) & 1.87 (0.34) & 0.42\% (0.093) \\
           & CycleMorph & 77.32 (2.34) & 1.97 (0.33) & 0.22\% (0.15) \\
           & TransMorph-diff & 79.84 (1.93) & 1.88 (0.23) & 0.0012   (0.0006) \\
          \midrule
          \multirow{4}{*}{Optimization} & IDIR & 72.31 (1.23) & 2.75   (0.34) & 0.0389\% (0.0105)\\
           & NODEO & 80.86 (1.14) & 1.55 (0.12) & 0.0105\% (0.0041)\\
           & NePhi & 77.89 (1.34) & 1.77 (0.13) & 0.0011\% (0.0003) \\
           & DNVF & 79.4 & - & 0.0015\%\\
           & Cas-DNVF & 81.5 & - & 0.0036\%\\
          \midrule
          Our Method & SGDIR & \textbf{82.23} (1.38) & \textbf{1.44}   (0.16) & \textbf{0.0009\%} (0.0007)\\
          \bottomrule
        \end{tabular}}
    \end{minipage}
    \hfill
    \begin{minipage}[t]{0.49\linewidth}
        \caption{\small Comparison of the performance of SGDIR with other methods on the CANDI dataset.}
  \label{tab:dice2}
  \centering
  \resizebox{\columnwidth}{!}{
  \begin{tabular}{lcccc}
    Category & Model & Dice ($\uparrow$) & HD95 ($\downarrow$) & $|\text{det}(J_\phi) < 0|$ ($\downarrow$)\\
    \midrule
    Traditional & SyN & 73.60 (1.15) & 2.13 (0.18) & 0.0011\% (0.0008) \\
    \midrule
    \multirow{3}{*}{Learning} & VoxelMorph & 75.63 (1.78) & 1.78 (0.24) &  0.0011\% (0.0009) \\
     & SYMNet & 75.89 (1.13) & 1.77 (0.24) & 0.0015\% (0.0008) \\
     & LapIRN & 76.23 (1.76) & 1.69 (0.21) & 0.0004\% (0.0002)\\
     & DiffuseMorph & 75.63 (2.01) & 1.83 (0.26) & 0.0041\% (0.0012) \\
     & PULPo & 77.97 (1.43) & 1.80 (0.17) & 0.0097\% (0.0011) \\
     & CorrMLP & 78.88 (1.92) & 1.78 (0.19) & 0.0018\% (0.0007) \\
     & H-ViT & 80.63 (1.10) & 1.66 (0.14) & 0.0476\% (0.0184) \\
     & CycleMorph & 77.31 (2.09) & 1.55 (0.13) & 0.0024\% (0.0011) \\
     & TransMorph-diff & 77.45 (1.39) & 1.51 (0.19) & 0.0021\% (0.0009) \\
    \midrule
    \multirow{3}{*}{Optimization} & IDIR & 74.42 (1.12) & 1.98 (0.11) & 0.0113\% (0.0081) \\
     & NODEO & 78.02 (1.52) & 1.76 (0.24) & $\approx$ 0\% \\
     & NePhi & 78.89 (1.44) & 1.73 (0.21) & 0.0004\% (0.0001) \\
     & DNVF & 78.6 & - & 0.0003\%\\
    \midrule
    Our Method & SGDIR & \textbf{80.73} (1.25) & \textbf{1.33} (0.20) & $\mathbf{\approx 0}$\% \\
    \bottomrule
  \end{tabular}}
    \end{minipage}
\end{table}

\begin{table}
    \centering
    \begin{minipage}[t]{0.49\linewidth}
        \caption{Comparison of the performance of SGDIR with other methods on the LPBA40 dataset.}
  \label{tab:dice3}
  \centering
  \resizebox{\columnwidth}{!}{
  \begin{tabular}{lcccc}
    Category & Model & Dice ($\uparrow$) & HD95 ($\downarrow$) & $|\text{det}(J_\phi) < 0|$ ($\downarrow$)\\
    \midrule
    Traditional & SyN & 66.82 (3.17) & 5.08 (0.34) & 0.0194\% (0.0072) \\
    \midrule
    \multirow{3}{*}{Learning} & VoxelMorph & 70.72 (2.33) & 4.02 (0.23) &  0.0023\% (0.0014) \\
     & SYMNet & 69.98 (1.37) & 4.06 (0.17) & 0.0018\% (0.0012) \\
     & LapIRN & 71.88 (1.36) & 3.98 (0.26) & 0.0011\% (0.0005)\\
     & DiffuseMorph & 68.73 (2.88) & 4.13 (0.32) & 0.0032\% (0.0011) \\
     & PULPo & 68.34 (1.88) & 4.36 (0.45) & 0.0052\% (0.0014) \\
     & CorrMLP & 73.12 (1.06) & 3.89 (0.21) & 0.0315\% (0.0021) \\
     & H-ViT & \textbf{74.87} (1.73) & \textbf{3.73} (0.19) & 0.0891\% (0.033) \\
     & TransMatch & 71.52 (1.66) & 4.01 (0.27) & 0.0981\% (0.0113) \\
     & CycleMorph & 71.03 (1.54) & 3.97 (0.32) & 0.0413\% (0.0087) \\
     & TransMorph-diff & 72.91 (1.46) & 3.83 (0.31) & 0.0093\% (0.0031) \\
    \midrule
    \multirow{2}{*}{Optimization} & IDIR & 67.93 (2.03) & 4.75 (0.43) & 0.0013\% (0.0008) \\
     & NODEO & 72.31 (1.12) & 3.96 (0.23) & 0.0009\% (0.0004) \\
     & NePhi & 69.39 (1.98) & 4.76 (0.74) & 0.0103\% (0.0045) \\
    \midrule
    Our Method & SGDIR & 74.23 (2.15) & 3.78 (0.13) & \textbf{0.0007\%} (0.0003) \\
    \bottomrule
  \end{tabular}}
    \end{minipage}
    \hfill
    \begin{minipage}[t]{0.49\linewidth}
        \caption{Comparison of the performance of SGDIR with other methods on the IXI dataset.}
  \label{tab:dice5}
  \centering
  \resizebox{\columnwidth}{!}{
  \begin{tabular}{lcccc}
    Category & Model & Dice ($\uparrow$) & HD95 ($\downarrow$) & $|\text{det}(J_\phi) < 0|$ ($\downarrow$)\\
    \midrule
    Traditional & SyN & 65.14 (2.09) & 5.12 (0.66) & 0.0201\% (0.0096) \\
    \midrule
    \multirow{3}{*}{Learning} & VoxelMorph & 70.76 (2.32) & 4.62 (0.31) &  0.0575\% (0.0043) \\
     & SYMNet & 71.21 (1.73) & 4.46 (0.27) & 0.0084\% (0.0033) \\
     & LapIRN & 72.71 (1.94) & 4.24 (0.37) & 0.0043\% (0.0012) \\
     & DiffuseMorph & 69.11 (2.14) & 4.91 (0.76) &  0.0175\% (0.0053) \\
     & PULPo & 69.24 (2.31) & 4.83 (0.55) &  0.0126\% (0.0044) \\
     & CorrMLP & 75.44 (1.32) & 3.76 (0.31) & 0.0192\% (0.0064) \\
     & H-ViT & 76.02 (1.45) & 3.52 (0.24) & 0.23\% (0.07) \\
     & TransMatch & 74.19 (1.24) & 3.89 (0.66) & 0.87\% (0.026) \\
     & CycleMorph & 70.39 (1.43) & 4.73 (0.54) & 0.47\% (0.013) \\
     & TransMorph-diff & 72.30 (2.01) & 4.08 (0.43) & 0.0049\% (0.0004)\\
    \midrule
    \multirow{2}{*}{Optimization} & IDIR & 68.25 (2.28) & 5.01 (0.68) &  0.0105\% (0.0033) \\
     & NODEO & 76.81 (1.09) & 3.11 (0.14) & 0.0013\% (0.0008) \\
     & NePhi & 73.81 (1.09) & 4.14 (0.38) & 0.0093\% (0.0027) \\
    \midrule
    Our Method & SGDIR & \textbf{77.98} (1.08) & \textbf{3.02} (0.12) & \textbf{0.0012\%} (0.0009) \\
    \bottomrule
  \end{tabular}}
    \end{minipage}
\end{table}

Tables \ref{tab:dice}, \ref{tab:dice2}, \ref{tab:dice3}, \ref{tab:dice5} indicate that SGDIR outperforms competent diffeomorphic methods both in terms of Dice score, and $|det(J_\phi)<0|$. SGDIR demonstrates 1.7\%, 2.3\%, 1.8\%, and 1.5\% improvements on Dice score with respect to the best evaluated diffeomorphic method over the OASIS, CANDI, LPBA40, and IXI datasets respectively. Among the non-diffeomorphic methods, H-ViT demonstrates the highest performance. The experiments demonstrate the SGDIR achieves competitive results against H-ViT in terms of Dice score while significantly improving the smoothness of deformation grid through $|det(J_\phi)<0|$. This implies that the imposition of the semigroup property as a regularization in the time continuum of the deformation field can in fact lead to a precise and smooth deformation.
% Finally, \ref{tab:dice4} demonstrates the advantage of SGDIR in terms of the TRE metric over the DIRLAB dataset. This implies that the imposition of the semigroup property as a regularization in the time continuum of the deformation field can in fact lead to a precise and smooth deformation.

Among the compared methods, DiffuseMorph is also capable of generating a temporal deformation flow. Therefore, we have provided a visual comparison of warped images within 7 time steps for the OASIS dataset in Figure \ref{fig:temporal}. In this example we have used the subject id 1 and 2 of the OASIS dataset as the fixed image and the moving image respectively. More visual results are provided in Appendix \ref{app:visuals}. Also, additional results on another dataset with a different modality is available in Appendix \ref{app:dirlab}.

\begin{figure*}
    \centering
    \includegraphics[width=\textwidth]{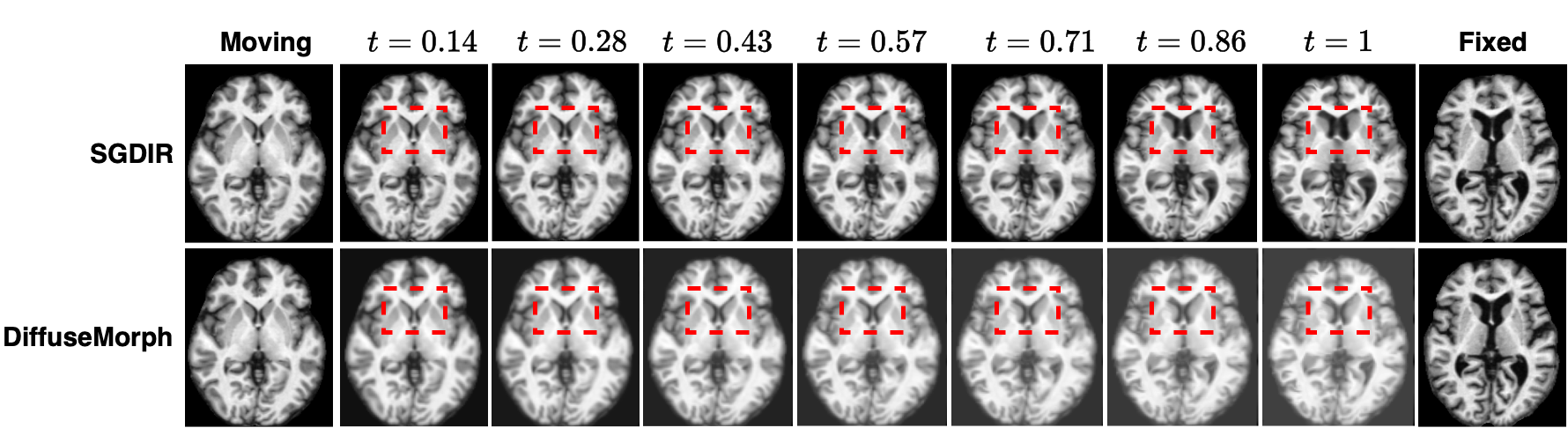}
    \caption{The visual comparison of the continuous deformations of DiffuseMorph and SGDIR for OASIS dataset.}
    \label{fig:temporal}
\end{figure*}

\subsection{Ablation Study}
% In addition to the main results on the efficacy of SGDIR in providing high Dice scores while preserving the topology, 
Here we investigate the effect of some hyperparameters and design choices along with some consequent properties of SGDIR. More precisely, we investigate the effect of the semigroup regularization factor $\lambda$, the effect of continuous time scale against discrete time, and the topology-preserving property throughout the time interval. Since the behavior of the model is the same for all datasets, we provide the analysis only for the OASIS dataset.
% which is a more complex dataset with larger deformations.

\subsubsection{Effect of Regularization Factor}
\label{sec:lambda}

Eq. \ref{eq:reg} is responsible for imposing the semigroup property to the model and consequently, make the model produce topology-preserving non-folding deformations. Table \ref{tab:lambda} provides the effect of changing the factor $\lambda$ for the semigroup regularization on the Dice score and $|det(J_\phi)<0|$. This table indicates that for a choice of a very large $\lambda$ we could achieve zero-folding deformations with a decrease in the dice score. Decreasing $\lambda$ gives more freedom to the model to increase the similarity of the deformed image and the fixed image in exchange for a larger folding percentage. An interesting property exhibited here is that drastically decreasing $\lambda$ (e.g., $\lambda=1$) is not helpful for achieving the highest Dice score. This indicates that the semigroup property is also helpful in narrowing down the trajectory space of the deformations, helping the model to converge to the more optimal solution.

\begin{table}
  \caption{Analysis of the effect of the semigroup regularization factor on the Dice score and the percentage of negative Jacobian determinants over the OASIS dataset.}
  \label{tab:lambda}
  \centering
  \resizebox{0.4\textwidth}{!}{
  \begin{tabular}{lcc}
    \toprule
    $\lambda$ & \small{Dice} $(\uparrow)$ & $|\text{det}(J_\phi) < 0|$ $(\downarrow)$ \\
    \midrule
    $10^7$ & 76.20 & \textbf{0.0000\%} \\
    $10^5$ & 82.11 & 0.0011\% \\
    $10^4$ & \textbf{83.45} & 3.0691\% \\
    $10^3$ & 82.66 & 6.0955\% \\
    $10^2$ & 80.01 & 6.7403\% \\
    $10$ & 80.23 & 7.0794\% \\
    $1$ & 79.80 & 7.8612\% \\
    \bottomrule
  \end{tabular}}
\end{table}

\subsubsection{Diffeomorphism Through Time}
Applying the semigroup regularization ensures that the deformation is a diffeomorphism not only at the endpoints, but also at all times $t\in[0, 1]$. Figure \ref{fig:jdets} illustrates how $|det(J_\phi) < 0|$ varies across time. The negative time interval is used for reverse warping; i.e., warping the fixed image towards the moving image. The upper figure illustrates the effect of different values of $\lambda$ on $|det(J_\phi)<0|$ throughout the time interval, and one can see that a proper factor for the semigroup regularization could lead to an almost perfect diffeomorphism in each time step. The bottom figure, focuses on the performance of the best model. The purple plot and the right y-axis indicate the Dice score of intermediate warped images throughout time and the black plot and the left y-axis indicate the $|det(J_\phi)<0|$. The shaded areas for both plots show the variance of the corresponding metric. We observe that the semigroup regularization successfully preserves the topology at all time steps, while keeping the Dice score high during the transition. 

\begin{figure}[!ht]
    \centering
    \includegraphics[width=\textwidth]{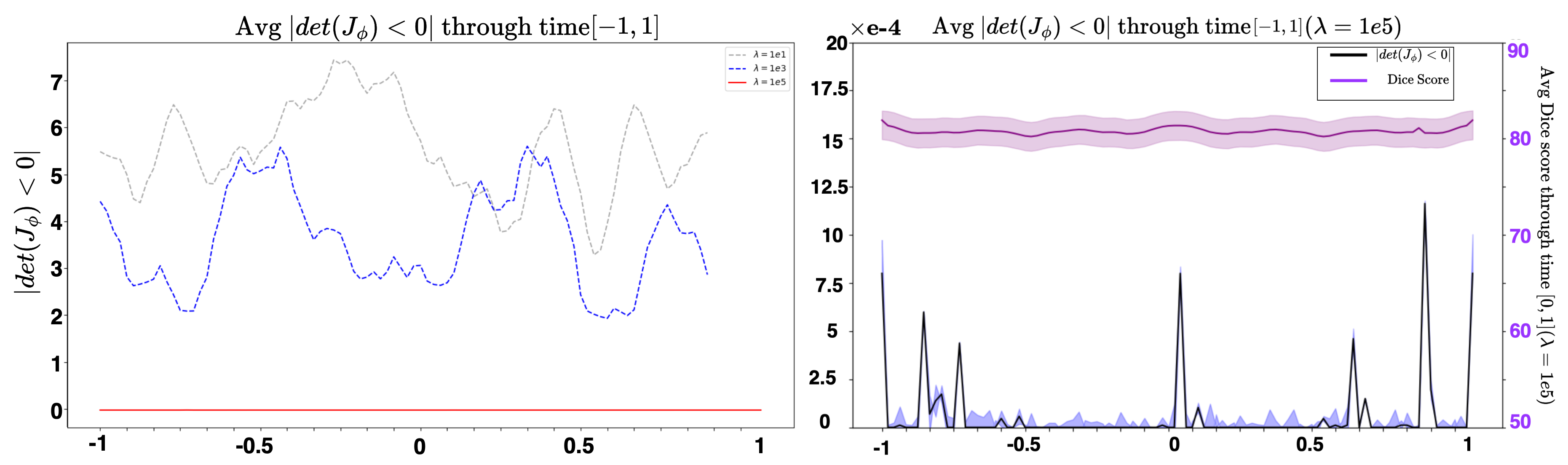}
    \caption{The percentage of negative Jacobian determinants through time. The upper figure compares the effect of $\lambda$ on the $|det(J_\phi)<0|$ and the bottom s figure focuses on the best configuration model. The shaded area in the bottom figure indicates the variance across the test set.}
    \label{fig:jdets}
\end{figure}

\subsubsection{Effect of Discrete Time Steps}
SGDIR can compute the deformation at an arbitrary time step using a continuous sampling from the time interval $[0,1]$. Here, we investigate the effect of using discrete time steps for training the model. The number of time steps indicates the number of intermediate time steps used during training. For instance, 0 time steps means that only the endpoint are used for registration (at time 1), and 8 time steps means that 8 equidistant intermediate time steps are used during training. $\infty$ is the continuous time sampling as done in our method. At the evaluation phase, we query the model at times 1 and -1 to warp the moving image towards the fixed image, and vice versa. Table \ref{tab:discrete} demonstrates that increasing the number of time samples in the interval $[0, 1]$ improves the performance of the model both in terms of Dice score and $|det(J_\phi)<0|$, while the best performance is achieved by continuous sampling. Figure \ref{fig:timsteps} in Appendix \ref{app:visuals} provides an illustrative inspection of this effect as well.

\begin{table}[!ht]
  \caption{Effect of the number time steps on the Dice score and the percentage of negative Jacobian determinants. }
  \label{tab:discrete}
  \centering
  \resizebox{0.4\textwidth}{!}{
  \begin{tabular}{lcc}
    \toprule
    \# Time Steps & \small{Dice} & $|\text{det}(J_\phi) < 0|$ \\
    \midrule
    $0$ & 73.12 & 13.121\% \\
    $1$ & 73.68 & 0.152\% \\
    $2$ & 74.23 & 0.065\% \\
    $4$ & 75.03 & 0.008\% \\
    $8$ & 75.45 & 0.002\% \\
    $\infty$ & \textbf{82.23} & \textbf{0.0009\%} \\
    \bottomrule
  \end{tabular}}
\end{table}

\begin{table}[!ht]
  \caption{Inference time comparison between SGDIR and other methods.}
  \label{tab:inftime}
  \centering
  \resizebox{0.4\textwidth}{!}{
  \begin{tabular}{ll}
    \toprule
    Model & Inference Time (s) $(\downarrow)$ \\
    \midrule
    SYMNet & 0.41 \\
    VoxelMorph & 0.41 \\
    TM-diff & 0.39 \\
    PULPo & 0.38 \\
    DiffuseMorph & 5.77 \\
    H-ViT & 0.31 \\
    \midrule
    NODEO & 58.37 \\
    IDIR & 149.24 \\
    NePhi & 171.86 \\
    \midrule
    SGDIR & \textbf{0.23} \\
    \bottomrule
  \end{tabular}}
\end{table}

\subsection{Computational Analysis}
\label{app:inftime}

SGDIR is comprised of a rather simple architecture and avoids scaling and squaring integration. Here we have compared the inference time of SGDIR with other methods to evaluate how removing the integration can help the method to have faster inference time. The inference times are calculated for a single pair of brain MR scans. Table \ref{tab:inftime} compares the inference times. We note that among the learning-based diffeomorphic methods, SGDIR is 1.8 times as fast as SYMNet which employs scaling and squaring during the inference. DiffuseMorph requires us to take the iterative sampling steps of diffusion models and consequently, has much higher inference time. Due to existence of computationally expensive self- and cross-attentions, H-ViT demonstrates inferior inference time comparing to SGDIR. Finally, instance optimization-based methods which need to optimize the deformation grid for separate pairs, demonstrate orders of magnitude higher inference times, which is expected from such methods. More details on the number of parameters of SGDIR and its memory use can be found in Appendix \ref{app:impl} and in Table \ref{tab:comp}.

\section{Conclusion}
\label{sec:conc}
\large
In this paper, we proposed SGDIR, a novel learning-based model for diffeomorphic image registration with a time-embedded UNet architecture and using a semigroup-based regularization. The time-embedded network allows us to compute deformation and consequently warp images at any time point, and the semigroup regularization enables the model to generate diffeomorphic deformations and its inverse. The resulting model is capable of generating smooth warpings from the moving image to the fixed image and vice versa without requiring any integration scheme or using further regularization terms on the smoothness and inverse consistency. We evaluated our model on four benchmark datasets on 3D medical image registration and demonstrated its superiority in various metrics compared to the state-of-the-art methods. In a broader perspective, the proposed framework provides a systematic method for addressing problems which can be modeled as an autonomous ODE.

%Bibliography
\bibliographystyle{unsrt}  
\bibliography{main}

\section{Appendix}

\subsection{Proofs}
\label{app:proof}
In order to prove Proposition \ref{prop}, we present two lemmas and then proceed to prove the proposition.

\begin{lemma}
\label{lem}
    The model $\phi(x, t)$ satisfying the composition rule of Eq. \ref{eq:composition} and $\phi(x, 0) = x$ is invertible for any $t \in (-1, 1)$:
\end{lemma}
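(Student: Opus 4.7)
My plan is to show that for each $t \in (-1, 1)$, $\phi_t$ admits $\phi_{-t}$ as a two-sided inverse. The argument proceeds in three stages: establish the base case at $t = 1/2$, decompose a general $\phi_\tau$ via the composition rule, and close the argument by a topological continuation that exploits the continuity of $\phi_t$ in $t$.

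First I would evaluate Eq. \ref{eq:composition} at $t = 1/2$: this gives $\phi_0 = \phi_{1/2} \circ \phi_{-1/2} = \phi_{-1/2} \circ \phi_{1/2}$, and together with $\phi_0 = Id$, it establishes directly that $\phi_{1/2}$ is bijective with inverse $\phi_{-1/2}$. The case $\tau = 0$ is immediate from the initial condition.

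For a general $\tau \in (-1, 1)$, I would use the composition rule as a decomposition tool. Setting $s = (\tau + 1)/2 \in (0, 1)$ yields $\phi_\tau = \phi_s \circ \phi_{s-1} = \phi_{s-1} \circ \phi_s$, and symmetrically, taking $s' = 1 - s$ gives $\phi_{-\tau} = \phi_{1-s} \circ \phi_{-s}$. Verifying $\phi_\tau \circ \phi_{-\tau} = Id$ then amounts to collapsing the four-fold product $\phi_s \circ \phi_{s-1} \circ \phi_{1-s} \circ \phi_{-s}$, whose middle pair $\phi_{s-1} \circ \phi_{1-s}$ has the form $\phi_a \circ \phi_{-a}$ with $a = s - 1 \in (-1, 0)$, and whose outer pair similarly reduces to $\phi_s \circ \phi_{-s}$.

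The main obstacle is that this reduction is not magnitude-decreasing, so a naive induction on $|\tau|$ fails. I would therefore close the argument topologically: let $I = \{t \in (-1, 1) : \phi_t \text{ is invertible}\}$. Then $I$ is nonempty (it contains $0$ and $\pm 1/2$); $I$ is open in $(-1, 1)$ because $\phi_t(x) = x + t\mathbf{F}(x, t; I_f, I_m, \theta)$ depends continuously on $t$ and invertibility of a $C^1$ map on a compact domain is stable under small perturbations; and $I$ is closed in $(-1, 1)$ by combining the composition-rule decomposition above with convergence of the candidate inverses $\phi_{-t_n} \to \phi_{-t^\ast}$. Connectedness of $(-1, 1)$ then forces $I = (-1, 1)$. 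The delicate step I expect to labor over is closedness, which needs both the compactness of $\Omega$ and the explicit parametrization of $\phi_t$ in order to pass invertibility to the limit.
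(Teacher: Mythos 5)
Your argument contains a genuine gap, and it sits exactly where you predicted trouble: the closedness half of the clopen argument. The root cause is that the composition rule of Eq.~\ref{eq:composition} only relates deformations whose time indices differ by exactly $1$ (pairs of the form $(t, t-1)$), so the identity $\phi_t^{-1} = \phi_{-t}$ is extractable from it only when $-t = t-1$, i.e.\ $t = \tfrac{1}{2}$ (plus the trivial $t=0$). Your stage-two decomposition already stalls on this: collapsing the middle pair $\phi_{s-1}\circ\phi_{1-s} = \phi_a\circ\phi_{-a}$ requires knowing that $\phi_{-a}$ inverts $\phi_a$ for $a = s-1$, which is an instance of the very claim being proven, and, as you concede, no magnitude decrease rescues an induction. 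The topological continuation then inherits this circularity in disguise. Your openness argument is about the set $I = \{t : \phi_t \text{ invertible}\}$, but your closedness argument invokes ``convergence of the candidate inverses $\phi_{-t_n}$,'' which presupposes that the inverse of $\phi_{t_n}$ \emph{is} $\phi_{-t_n}$ --- i.e.\ membership in the strictly smaller set $J = \{t : \phi_t\circ\phi_{-t} = \phi_{-t}\circ\phi_t = Id\}$, which you have only established for $t \in \{0, \pm\tfrac{1}{2}\}$. If you work with $I$, closedness fails as stated: a uniform limit of invertible maps need not be invertible (think of $f_n(x) = x/n$), and neither compactness of $\Omega$ nor the parametrization $\phi_t = x + t\mathbf{F}$ forbids degeneration at $t^\ast$; the Jacobian of $\phi_{t_n}$ can approach singularity as $t_n \to t^\ast$. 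If instead you work with $J$, closedness is immediate (it is the zero set of a continuous function of $t$) but openness is unsupported: $C^1$-stability gives you \emph{some} local inverse near $t_0$, not that this inverse coincides with $\phi_{-t}$. Either way the connectedness argument does not close.

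The paper's proof takes a different and purely algebraic route that sidesteps identifying the inverse altogether. It expands $\phi_0 = Id$ recursively through Eq.~\ref{eq:composition} into a composition of $2^n$ factors $\phi_p$ with $p$ ranging over the arbitrarily fine grid $A_n = \{\pm\frac{2m+1}{2^n} : 2m+1 < 2^n\}$ (Eq.~\ref{eq:special}); because the full composition equals the identity, every factor appearing in it must be invertible, so invertibility holds on $\bigcup_n A_n$, which is dense in $(-1,1)$, and continuity of $\phi$ in $t$ extends it to all of $(-1,1)$ and to the endpoints. The key idea you are missing is that one need only exhibit $\phi_t$ (approximately) as a \emph{factor of an expansion of the identity}; pinning down that the inverse is specifically $\phi_{-t}$ requires the additivity $\phi_p\circ\phi_q = \phi_{p+q}$, which the paper only derives later, in the proof of Proposition~\ref{prop}, and which should not be assumed inside Lemma~\ref{lem}.
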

\begin{proof}
    For an arbitrary $s \in (-1, 1)$, we can expand $\phi_s$ using the composition rule of Eq. \ref{eq:composition} to obtain
    \begin{gather}
    \label{eq:binary}
        \begin{split}
            \phi_s &= \phi_{\frac{s+1}{2}}\circ\phi_{\frac{s-1}{2}} \\
            &= (\phi_{\frac{s+3}{4}}\circ\phi_{\frac{s-1}{4}})\circ(\phi_{\frac{s+1}{4}}\circ\phi_{\frac{s-3}{4}}) \\
            &\vdots \\
            &= (\phi_{\frac{s}{2^n} + \frac{2^n - 1}{2^n}}\circ\phi_{\frac{s}{2^n} - \frac{1}{2^n}})\circ \dots \circ(\phi_{\frac{s}{2^n} + \frac{1}{2^n}}\circ\phi_{\frac{s}{2^n} - \frac{2^n - 1}{2^n}}).
        \end{split}
    \end{gather}

Eq. \ref{eq:binary} shows a binary tree expansion for $\phi_s$. At the $n^{\text{th}}$ level of the tree, there are $2^n$ leaf nodes. Furthermore, setting $s=0$ on both sides, and knowing that $\phi_0$ is the identity map, we will get

\begin{gather}
\label{eq:special}
    \phi_0 = Id = (\phi_{\frac{2^n - 1}{2^n}}\circ\phi_{\frac{-1}{2^n}})\circ \dots \circ(\phi_{\frac{1}{2^n}}\circ\phi_{\frac{1 - 2^n}{2^n}}),
\end{gather}
which implies that all the constituent factors must be invertible. Thus the composition rule of Eq. \ref{eq:composition} gives an arbitrarily fine grid $A_n \equiv \{\pm \frac{2m+ 1}{2^n}: 2m + 1 < 2^n, m \in \mathbb{Z}\}$ for any $n\in \mathbb{N}$ over the interval $(-1, 1)$, where $\phi$ is invertible on each grid point. For any $s\in(-1, 1)$ we can find an arbitrarily small neighborhood around a grid point $p_s \in A_n$ containing $s$. In the limit, $p_s$ coincides with $s$, and we will get
\begin{gather}
    \lim_{p_s\in A_n, n \rightarrow \infty} \phi_{p_s} = \phi_s,
\end{gather}
making $\phi_s$ invertible. We can conclude that a continuous $\phi_s$ is invertible for any $s\in(-1, 1)$. Moreover, due to the continuity of $\phi$, $\phi_1$ and $\phi_{-1}$ are invertible.
\end{proof}

\begin{corollary}
\label{cor}
    $\phi$ is closed under the composition operation.
\end{corollary}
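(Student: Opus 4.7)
The corollary claims that the family $\{\phi_t\}$ is closed under composition, i.e., for any admissible $a$ and $b$ there is some $c$ with $\phi_a \circ \phi_b = \phi_c$; read in concert with Proposition~\ref{prop}, the expected conclusion is $c = a + b$. My plan is to combine the composition identity Eq.~\ref{eq:composition} with the binary-tree refinement and the invertibility supplied by Lemma~\ref{lem}, and then to promote the result from a dense dyadic subset to the full interval by continuity.

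First I would pin down inverses. Applying Eq.~\ref{eq:composition} at $t = 1/2$ gives $\phi_{1/2} \circ \phi_{-1/2} = \phi_0 = Id$, and by the symmetric half of the same identity $\phi_{-1/2} \circ \phi_{1/2} = Id$ as well, so $\phi_{1/2}^{-1} = \phi_{-1/2}$. The analogous factorizations in Eq.~\ref{eq:special} deliver $\phi_{k/2^n}^{-1} = \phi_{-k/2^n}$ for every dyadic rational in the range, and continuity of $t \mapsto \phi_t$ extends this to $\phi_t^{-1} = \phi_{-t}$ throughout $(-1, 1)$.

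Next, to handle a general composition $\phi_a \circ \phi_b$, I would expand both factors to a common dyadic depth via Eq.~\ref{eq:binary}, turning the expression into a long composition of $\phi_{p/2^n}$ pieces. I would then repeatedly apply Eq.~\ref{eq:composition} to fuse neighboring factors whose indices differ by exactly $2^{-n}$, and use the inverse identities from the previous step to cancel mismatched pairs, collapsing the product down to $\phi_{(a+b+1)/2} \circ \phi_{(a+b-1)/2}$; one final application of Eq.~\ref{eq:composition} rewrites this as $\phi_{a+b}$. Density of dyadic rationals together with continuity of $t \mapsto \phi_t$ then gives the claim for arbitrary admissible $a, b$.

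The principal obstacle I foresee is justifying the intermediate rearrangements rigorously: Eq.~\ref{eq:composition} only directly fuses the specific pair $(\phi_t, \phi_{t-1})$, so to move other dyadic-indexed factors past each other I expect to need an induction on the level $n$, extracting enough commutation and cancellation identities from Eq.~\ref{eq:special} at each level via conjugation by the invertible factors guaranteed by Lemma~\ref{lem}. Once these combinatorial relations are secured on the dyadic subset, the passage to the continuum is routine, and the corollary follows.
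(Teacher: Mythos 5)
Your proposal attempts to prove much more than the corollary states, and it breaks down precisely at the extra strength. The corollary only asserts closure: for $s,t\in(-1,1)$ there exists \emph{some} index $f(s,t)$ with $\phi_s\circ\phi_t=\phi_{f(s,t)}$. The paper's proof is correspondingly soft: by Lemma \ref{lem} both factors are invertible, hence so is their composition, and its inverse is identified as $\psi_{f(s,t)}$ for an unspecified $f:(-1,1)^2\rightarrow\mathbb{R}$; the identification $f(s,t)=s+t$ is deliberately postponed to Proposition \ref{prop}, whose proof needs Lemma \ref{lem2} and a consistency argument over all $2^{n(n+1)/2}$ expansions of Eq. \ref{eq:special}. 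By aiming directly at $\phi_a\circ\phi_b=\phi_{a+b}$, you are effectively trying to prove Proposition \ref{prop} inside the corollary, with tools that only become available afterwards; indeed, the paper's Lemma \ref{lem2} itself invokes this corollary just to guarantee that arbitrary compositions of factors are well defined members of the family.

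Two concrete gaps. First, your claim that Eq. \ref{eq:special} ``delivers'' $\phi_{k/2^n}^{-1}=\phi_{-k/2^n}$ for every dyadic index is unjustified. Eq. \ref{eq:special} is a long composition equal to the identity; that makes every factor invertible (which is exactly Lemma \ref{lem}), but without commutation rules you cannot pair $\phi_p$ off against $\phi_{-p}$ inside the product. At level $n=2$, for instance, $\phi_{3/4}\circ\phi_{-1/4}\circ\phi_{1/4}\circ\phi_{-3/4}=Id$ gives $\phi_{3/4}^{-1}=\phi_{-1/4}\circ\phi_{1/4}\circ\phi_{-3/4}$, and reducing the right-hand side to $\phi_{-3/4}$ already requires $\phi_{-1/4}\circ\phi_{1/4}=Id$, the same kind of inverse identity one level down, so the argument is circular; the only inverse pair that Eq. \ref{eq:composition} hands you directly is the case $t=1/2$. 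Second, your collapsing step proposes to ``fuse neighboring factors whose indices differ by exactly $2^{-n}$,'' but Eq. \ref{eq:composition} fuses only pairs whose indices differ by exactly $1$ (namely $\phi_t\circ\phi_{t-1}=\phi_{2t-1}$), so none of the intermediate fusions you describe are licensed by it. You acknowledge this obstacle in your closing paragraph and defer it to ``an induction extracting enough commutation and cancellation identities,'' but that induction is the entire mathematical content of Lemma \ref{lem2} and of the consistency argument defining $f(p,q)=p+q$ in the proof of Proposition \ref{prop}; it is the crux, not a routine detail. As written, your argument rigorously establishes only the invertibility of $\phi_{1/2}$, and closure remains unproven.
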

\begin{proof}
    According to Lemma \ref{lem}, $\phi_s$ is invertible for any $s\in(-1, 1)$. For an arbitrary $s$, let $\psi_s$ denote the inverse of $\phi_s$. For any $s, t \in (-1, 1)$, both $\phi_s$ and $\phi_t$ are invertible. So, the composition $y = \phi_s\circ\phi_t(x)$ must also be invertible with an inverse mapping $x = \psi_{f(s, t)}(y)$, where $f: (-1, 1)^2 \rightarrow \mathbb{R}$. Therefore, we must have $\phi_s\circ\phi_t = \phi_{f(s, t)}$, indicating that $\phi$ is closed under composition.
\end{proof}

\begin{remark}
    There are $2^{\frac{n(n+1)}{2}}$ different ways to expand the binary tree in Eq. \ref{eq:special}. To see this, note that for one particular arrangement of factors at level $n$, we have $2^{n + 1}$ different configurations at level $n+1$. This is because two children nodes can be arranged in two possible ways for every parent node. Further, for $L_n$ different arrangements at level $n$, we have $L_{n+1} = 2^{n+1}L_n$ arrangements at level $n+1$. Noting that $L_1 = 2$, and working with the recurrence relation, we can find that $L_n = 2^{\frac{n(n+1)}{2}}$. This is merely an observation and does not affect any of the results.
\end{remark}

\begin{lemma}
    \label{lem2}
    For any two distinct grid points $p, q \in A_n, p\neq q$, the factors $\phi_p$ and $\phi_q$ appear next to each other in at least one of the $2^{\frac{n(n+1)}{2}}$ possible expansions in Eq. \ref{eq:special}.
\end{lemma}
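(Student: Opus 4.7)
The plan is to exploit the binary tree structure underlying Eq. \ref{eq:special} together with the observation that both forms of the composition rule in Eq. \ref{eq:composition} give an independent left/right ordering choice at every internal node. Concretely, each expansion of $\phi_0$ can be viewed as a full binary tree of depth $n$ whose root is $\phi_0$, whose leaves are precisely the elements $\phi_p$ for $p \in A_n$, and whose internal node $\phi_a$ splits into children $\phi_{(a+1)/2}$ and $\phi_{(a-1)/2}$; the two versions of Eq. \ref{eq:composition} correspond to the two possible orderings of these children, and this binary choice is independent across internal nodes. Writing out the leaves in left-to-right order recovers the linear expansion in Eq. \ref{eq:special}.

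Given two distinct grid points $p, q \in A_n$, I would first locate their lowest common ancestor $v$ in this tree. Since $p \neq q$, the node $v$ is a proper internal node, and $p$, $q$ lie in the two different subtrees of $v$, which I call $T_p$ and $T_q$. The key combinatorial step is a standard sort-by-local-swaps argument: by choosing, at each internal node on the path from $p$ up to $v$ (exclusive), the swap that keeps $p$ in the right child, $p$ can be forced to be the rightmost leaf of $T_p$ in the leaf traversal; symmetrically, $q$ can be forced to be the leftmost leaf of $T_q$. Finally, I would pick the swap at $v$ itself so that $T_p$ is the left child of $v$ and $T_q$ is the right child. These are $O(n)$ independent choices, so they do not conflict with one another.

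With these swaps in place, the left-to-right leaf traversal of the full tree enumerates the leaves of $T_p$, ending with $p$, immediately followed by the leaves of $T_q$, beginning with $q$; hence $\phi_p$ and $\phi_q$ sit next to each other in the resulting expansion of Eq. \ref{eq:special}. All swap choices elsewhere in the tree remain free and merely produce further valid expansions with the same adjacency, consistent with the claim that at least one of the $2^{n(n+1)/2}$ expansions exhibits this adjacency. The main obstacle I anticipate is not the adjacency argument itself, which is a clean induction on subtree depth, but rather the bookkeeping required to formalize the tree structure: one must verify that the recursive split $\phi_a = \phi_{(a+1)/2}\circ\phi_{(a-1)/2}$ at level $k$ produces exactly the grid points $A_n$ as leaves without repetitions, and that the two orderings supplied by Eq. \ref{eq:composition} genuinely give an unconstrained swap at every internal node independently of the others.
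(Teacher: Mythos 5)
Your proof is correct, and it arrives at the lemma by a different route than the paper. The paper argues by induction on $n$: it checks $n=1$ and $n=2$ by hand, and in the inductive step distinguishes whether $p,q\in A_n$ are children of the same or of different parents in $A_{n-1}$; in the latter case it places the two parents next to each other via the induction hypothesis and finishes with a ``direct verification similar to the case $n=2$''. You instead give a single, non-inductive construction: an expansion of Eq. \ref{eq:special} is a choice of left/right order at each internal node of the depth-$n$ binary tree, so you locate the lowest common ancestor $v$ of the leaves $p$ and $q$, spend the choices along the path below $v$ toward $p$ to make $p$ the rightmost leaf of its subtree, the choices along the path toward $q$ to make $q$ the leftmost leaf of its subtree, and the choice at $v$ to put the $p$-subtree on the left; since these are choices at pairwise distinct nodes they never conflict, and contiguity of subtree leaves in the left-to-right traversal yields the adjacency. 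Both proofs exploit exactly the same freedom---Eq. \ref{eq:composition} licenses both child orders, independently at every node---but your version makes the required choices explicit, treats the sibling and non-sibling cases uniformly, and replaces the paper's loosely stated inductive step with a transparent construction. A side benefit: your argument never invokes the total number of expansions. That is just as well, because the count $2^{\frac{n(n+1)}{2}}$ quoted in the statement (obtained in the paper's Remark from the recurrence $L_{n+1}=2^{n+1}L_n$) undercounts for $n\ge 3$: there is one independent binary choice at each of the $2^n-1$ internal nodes, giving $2^{2^n-1}$ configurations, which agrees with $2^{\frac{n(n+1)}{2}}$ only for $n\le 2$. The lemma's actual content---existence of at least one expansion in which $\phi_p$ and $\phi_q$ are adjacent---is insensitive to this count, and your proof establishes it.
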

\begin{proof}
    Due to the associative property of function composition, we can remove the parentheses in Eq. \ref{eq:special}. This gives us $2^n - 1$ compositions connecting $2^n$ factors. Remember that there are a total of $2^{\frac{n(n+1)}{2}}$ possible expansions in Eq. \ref{eq:special} at level $n$. Also, note that all the compositions appearing in all possible $n^{\text{th}}$ level expansions are well-defined since $\phi$ is closed under the composition operation as per Corollary \ref{cor}.

    The proposition is trivially true for $n=1$, because $\phi_0 = \phi_{1/2}\circ\phi_{-1/2} = \phi_{-1/2}\circ\phi_{1/2}$. We can also verify the proposition for $n=2$ with direct examination. For $n=2$ there are 4 constituent factors $\phi_{3/4}, \phi_{-1/4}, \phi_{-3/4},$ and $\phi_{1/4}$, and all $2(C_2^4) = 12$ compositions $\phi_{3/4}\circ\phi_{-3/4}, \phi_{3/4}\circ\phi_{-1/4}, \phi_{3/4}\circ\phi_{1/4}, \dots$, appear in $2^{\frac{2 \times 3}{2}} = 8$ possible expansions.

    Assume that the proposition holds for $n-1$. Take $p, q \in A_n$ and $p\neq q$. If $p$ and $q$ belong to the same parent in $A_{n-1}$, then $\phi_p$ and $\phi_q$ appear next to each other. On the other hand, if they belong to two different parents in $A_{n-1}$, then, based on our assumption, these two parents must have appeared next to each other at least once. In this case, there are 4 children factors including $\phi_p$ and $\phi_q$, and we can verify that any two of them appear next to each other using a direct verification similar to what we did for the case $n=2$. Hence, by mathematical induction, the proposition is true for any $n \in \mathbb{N}$.
\end{proof}

\textbf{Proposition 1.} \textit{A differentiable deformation $\phi(x, t)$ that satisfies the composition rule of Eq. \ref{eq:composition} and $\phi(x, 0) = x$ is an exponential map, or equivalently, it is a one-parameter diffeomorphism solving an autonomous ODE of the form of Eq. \ref{eq:ode}.}

\begin{proof}
    We turn our attention to the function $f$ used to define the composition $\phi_s\circ \phi_t = \phi_{f(s, t)}$. Thus, applying the notion of the function $f$ to Eq. \ref{eq:special}, we must have

    \begin{gather}
    \label{eq:f}
        0 = f(\frac{2^n - 1}{2^n}, f(-\frac{1}{2^n}, \dots, f(\frac{1}{2^n}, -\frac{2^n - 1}{2^n}))).
    \end{gather}
    
Defining $f$ to be the addition operation ($+$), Eq. \ref{eq:f} becomes consistent. Moreover, note that the order of composition can start with any two factors appearing next to each other in Eq. \ref{eq:special} (due to the associative property). According to the Lemma \ref{lem2}, for any $p, q\in A_n$ and $p\neq q$, we know that $\phi_p$ and $\phi_q$ appear next to each other in at least one of the $2^{\frac{n(n + 1)}{2}}$ possible expansions in Eq. \ref{eq:special}. Consistent evaluations of these $2^{\frac{n(n + 1)}{2}}$ expressions allow us to define $f(p, q) = p + q$ for any $p, q \in A_n$, for all $n$, such that $p\neq q$. Therefore, we will have $\phi_p \circ \phi_q = \phi_{p + q}$ for any $p, q\in A_n$ and $p\neq q$. Letting $p_s$ and $p_t$ denote the closest grid points to $s, t\in (-1, 1)$, where $s\neq t$, respectively, and taking the limits, we will get

\begin{gather}
    \begin{split}
        \phi_s\circ \phi_t &= \lim_{p_s, p_t\in A_n, n\rightarrow \infty} \phi_{p_s}\circ \phi_{p_t} \\
        &= \lim_{p_s, p_t\in A_n, n\rightarrow \infty} \phi_{p_s + p_t} \\
        &= \phi_{s + t}.
    \end{split}
\end{gather}

Additionally, taking the limit, we will get,

\begin{gather}
    \phi_s \circ \phi_s = \lim_{t\rightarrow s, t\in(-1, 1)} \phi_s\circ\phi_t = \lim_{t\rightarrow s, t\in(-1, 1)} \phi_{s + t} = \phi_{2s}.
\end{gather}

This proves that $\phi_s$ is a one-parameter family of transformations. Additionally, $\phi$ is differentiable, and therefore, it is an exponential map \cite{biagi2018introduction}.
\end{proof}

\subsection{Datasets Preparation}
\label{app:data}

\paragraph{OASIS Dataset}
The Open Access Series of Imaging Studies (OASIS) dataset\footnote{\url{hhtps://github.com/adalca/medical-datasets/blob/master/neurite-oasis.md}} \cite{oasis} contains 416 T1 weighted scans from subjects aging from 18 to 96 with 100 of them diagnosed with mild to moderate Alzheimer's disease. The segmentation masks of 35 subcortical regions available in the dataset serve as the ground truth for further evaluation of the registration. By random sampling we generate 1000 pairs for training, 100 pairs for validation and 1000 pairs for test.

\paragraph{CANDI Dataset}
The Child and Adolescent NeuroDevelopment Initiative (CANDI) dataset\footnote{\url{https://www.nitrc.org/projects/candi_share/}} \cite{candi} contains T1 weighted brain scans of 117 subjects divided into 4 different subgroups including Healthy Control (\textbf{HC}), Schizophrenia Spectrum (\textbf{SS}), Bipolar Disorder with Psychosis (\textbf{BPDwithPsy}), and Bipolar Disorder without Psychosis (\textbf{BPDwithoutPsy}). The segmentation masks of 32 subcortical regions available in the dataset are used as the ground truth for the evaluation. For training and evaluation, we mix all the groups, and from 117 subjects, we use 80, 11, 26 subjects for training, validation, and testing, respectively. We generate 400 pairs for training, 25 pairs for validation, and 300 pairs for test.

\paragraph{LPBA40 Dataset}
The LONI Probabilistic Brain Atlas (LPBA40) dataset\footnote{\url{https://www.loni.usc.edu/research/atlas_downloads}} \cite{lpba} comes with 40 scans and the segmentation maps of 56 cortical and subcortical regions. For the LPBA40 dataset 190 pairs from 20 subjects are used as the training set, 10 pairs from 5 subjects are used as the validation set and 105 pairs from 15 subjects are used for the test set.

\paragraph{IXI Dataset}
The Information eXtraction from Images (IXI) dataset \footnote{\url{https://brain-development.org/ixi-dataset}} contains 581 brain scans of normal, healthy subjects. From all 581 scans, we use 450, 50, 81 subjects for training, validation, and testing, respectively, from which we create 1500, 100, 300 random pairs for training, validation, and testing.

\paragraph{DIRLAB Dataset}
\label{par:dirlab}
The Deformable Image Registration LABoratory (DIRLAB) dataset \cite{dirlab} contains 10 pairs of lung CT scans corresponding to 10 patients. Each pair consists of the scans of lungs in the extreme inhaling and exhaling phases. The dataset also comes with 300 landmark points per image, annotated manually by experts, which are used for the evaluation. For the DIRLAB dataset, 8 subjects are used for training and 2 subjects are used for test.

\paragraph{Data Preparation}
\label{par:dp}
For the brain scan dataset we use the skull stripped, MNI152 1mm normalized images and center-cropped the images to the size $160\times144\times192$. For all datasets we use all the available anatomical structures for evaluation.  We use min-max intensity normalization to transform the voxel intensities into the $[0, 1]$ range. For the IXI dataset the segmentation maps are not readily available. Therefore, we used the FreeSurfer software \cite{freesurfer} to produce the segmentation maps with 32 subcortical anatomical structures. For the DIRLAB dataset, the images are resampled to isotropic voxels of size $1mm$ and cropped to the same size of $96\times256\times256$.

\subsection{Definition of the NCC loss function}
\label{app:ncc}

The NCC loss function used in this paper is similar to the definition used by previous methods \cite{voxelmorph, symnet, nodeo, dnvf}. For a pair of images $I$ and $J$ the localized NCC is defined as

\begin{gather}
    % \resizebox{\columnwidth}{!}{$
    \begin{split}
        Nom(I, J, W_p) &= \left(\sum_{p_i\in W_p}(I(p_i) - \mu_{I_{W_p}}(p_i))(J(p_i) - \mu_{J_{W_p}}(p_i))\right)^2\\
        Denom(I, J, W_p) &= \sum_{p_i\in W_p}(I(p_i) - \mu_{I_{W_p}}(p_i))^2 . \sum_{p_i\in W_p}(J(p_i) - \mu_{J_{W_p}}(p_i))^2\\
        NCC(I, J) &= \sum_{W_p, p\in\Omega}\frac{Nom(I, J, W_p)}{Denom(I, J, W_p)},
    \end{split}
    % $}
\end{gather}

where $\Omega$ is the definition domain of images and $W_p$ is a local window around the point $p\in\Omega$. The terms $I(p_i)$ and $J(p_i)$ denote the value of image $I$ and $J$ at the point $p_i$, respectively. Also, $\mu_{I_{W_p}}$ and $\mu_{J_{W_p}}$ denote the mean intensity of $I$ and $J$ in the local window $W_p$, respectively. The highest value for NCC is 1 in the case that $I$ and $J$ perfectly match, and the lowest value is 0 for the case of no correspondence.

\subsection{Evaluation Metrics}
\label{app:metrics}

\paragraph{Dice Score}
Following the convention of evaluating diffemorphic image registration methods (\cite{dnvf, nodeo, symnet, lapirn}), we use \textbf{Dice Similarity Coefficient} to measure the similarity of the registered and target segmentation maps for the brain scan datasets. Dice similarity coefficient is a method to measure the amount of overlap between the registered image and the target image. For our experiments we use the segmentation masks of the deformed image and the fixed image to measure the Dice coefficient. We follow the convention of VoxelMorph \cite{voxelmorph} for computing the Dice score, which measures the mean Dice over all anatomical structures.

\paragraph{HD95}
This metric computes 95\% percentile of the Hausdorff distance of the segmentation maps.

\paragraph{Percentage of Negative Jacobian Determinant}
The Jacobian determinant of the deformation (denoted by $|det(J_\phi)<0|$) at each point determines if the deformation is a local diffeomorphism (according to the Inverse Function Theorem) \cite{symnet}. Having positive Jacobian determinant at any point means an orientation preserving diffeomorphism at that point. Therefore, the percentage of the voxels at which the deformation exhibits non-positive Jacobian determinant is a measure of how well the model is preserving the topology. A lower the percentage of non-positive Jacobian determinant is an indication of better preservation of the topology and avoiding folding in the grid.

\paragraph{Target to Registration Errors}
This metric measures the error of registration in terms of the average distance between the landmark points of the warped image and the target fixed image. This metric is used for the DIRLAB dataset for which the landmark points are available. Since the images are resampled into $1mm$ isotropic voxels the TREs are reported in millimeter.

\subsection{Comparison Models}
\label{app:models}

For evaluating the performance of SGDIR, we have included the performance of 10 diffeomorphic methods and 5 non-diffeomorphic methods. Table \ref{tab:methods} summarizes these methods. For the VoxelMorph, we have used the diffeomorphic version of the model.

\begin{table}[!ht]
  \caption{Categorization of the methods used for comparisons.}
  \label{tab:methods}
  \centering
  \resizebox{0.4\textwidth}{!}{
  \begin{tabular}{lc}
    \toprule
    Model & Diffeomorphic \\
    \midrule
    SyN \cite{syn} & $\checkmark$ \\
    VoxelMorph \cite{voxelmorph} & $\checkmark$ \\
    SYMNet \cite{symnet} & $\checkmark$ \\
    NePhi \cite{nephi} & $\checkmark$ \\
    IDIR \cite{idir} & $\checkmark$ \\
    NODEO \cite{nodeo} & $\checkmark$ \\
    DNVF \cite{dnvf} & $\checkmark$ \\
    TransMorph-diff \cite{transmorph} & $\checkmark$ \\
    PULPo \cite{pulpo} & $\checkmark$ \\
    LapIRN \cite{lapirn} & $\checkmark$ \\
    \midrule
    DiffuseMorph \cite{diffusemorph} & $\times$ \\
    H-ViT \cite{hvit} & $\times$ \\
    CorrMLP \cite{corrmlp} & $\times$ \\
    CycleMorph \cite{cyclemorph} & $\times$ \\
    TransMatch \cite{transmatch} & $\times$ \\
    \bottomrule
  \end{tabular}}
\end{table}

\subsection{Implementation Details}
\label{app:impl}

SGDIR employs a straightforward time-embedded UNet architecture as shown in Figure \ref{fig:schematic}, The UNet encoder has 4 down-sampling layers of dimensions 32, 64, 128, 128, and 256, and the decoder has 4 up-sampling layers with the same dimensions as the down-sampling layers but in the reversed order. The time-embedding module consists of a sinusoidal positional encoding where the dimension is set to 64. All the activation functions for the layers are set to SiLU \cite{silu} to provide more smoothness to the network. In our experiments the regularization factor for the semigroup term is set to $\lambda = 5e^5$, while an ablation study on the choice of $\lambda$ is provided at \ref{sec:lambda}. The NCC window size is set to 11 chosen empirically from a set of values $\{3, 5, 7, 9, 11, 13, 17\}$. The training is carried out for 200 epochs using Adam optimizer with the learning rate $lr = 1e^{-4}$. The whole implementation is done in PyTorch and is tested on both NVIDIA GTX 1080 Ti and NVIDIA Tesla V100 GPUs with 32GB RAM. Table \ref{tab:comp} reports the computational cost of SGDIR over MR scans of size $160\times144\times192$, and compares it to the computational requirements of H-ViT, as a powerful non-diffeomorphic method. In this table, \textit{Train Itr. Time} denotes the time it takes for the model to train over a single pair of images, and similarly the inference time is reported based on a full registration process for a given pair of images at inference time. This table also reveals that the inference from SGDIR is relatively cheap in terms of memory usage and inference time, making it suitable for practical scenarios. This is mainly because that during the inference, we do not need the extra forward calls that were necessary during the training.

\begin{table}[!ht]
  \caption{Parameters/Memory Requirements of SGDIR vs H-ViT}
  \label{tab:comp}
  \centering
  \resizebox{0.5\textwidth}{!}{
  \begin{tabular}{lll}
    \toprule
    Variable & SGDIR & H-ViT \\
    \midrule
    Params. (\#M) & 16.21 & 21.23 \\
    Max. Mem. (Train + Val) (GB) & \textbf{21.53} & 22.60 \\
    Max. Mem. (Inference) (GB) & \textbf{0.17} & 0.38\\
    Train Itr. Time (s) & \textbf{16.8} &  22.06 \\
    Inference Time (s) & \textbf{0.23} & 0.31 \\
    \bottomrule
  \end{tabular}}
\end{table}

\subsection{Remark on the Semigroup Property}
\label{app:discussion}

Here we present a brief discussion on the importance of our particular design of semigroup property. The original semigroup property of Eq. \ref{eq:sg} must stand for arbitrary $t$ and $s$. If a model only satisfies Eq. \ref{eq:sg}, it does not yield a unique deformation. The data fidelity -i.e., how the deformation acts on the images- is necessary to find the correct deformation. If one intends to directly force Eq. \ref{eq:sg} for arbitrary $t$ and $s$, the data fidelity term required to find the correct deformation would be to match $\phi_t[I_m]$ to $\phi_{t-1}[I_f]$, and $\phi_s[I_m]$ to $\phi_{s-1}[I_f]$. Comparing to our formulation (Eq. \ref{eq:loss_sim} and Eq. \ref{eq:reg}), this pushes us to have 2 more forward calls for computing $\phi_s[I_m]$ and $\phi_{s-1}[I_f]$, which makes the training harder and more computationally demanding. However, we have shown that only the first data fidelity term, and its compositional rule suffices to ensure that we are actually learning the solution to our deformation ODE, i.e., a diffeomorphic deformation, without having more computational overhead.

\subsection{Experiment on Lung CT Dataset}
\label{app:dirlab}

To show that SGDIR is not limited to 3D brain MR scans, we have conducted a small experiment on the \textbf{DIRLAB} dataset \cite{dirlab} which contains the lung scans of 10 subjects in the extreme inhaling and exhaling phases. The detail of the dataset and our data preparation is explained in Sec. \ref{par:dirlab}. Table \ref{tab:dice4} demonstrates the advantage of SGDIR in terms of the TRE metric over the DIRLAB dataset. This table compares the models based on the TRE metric. Since the images are resampled to isotropic voxel size of $1mm$, the error is presented in $mm$ precision.

\begin{table}[!ht]
  \caption{Performance comparison of SGDIR with other methods on the DIRLAB dataset.}
  \label{tab:dice4}
  \centering
  \resizebox{0.7\textwidth}{!}{
  \begin{tabular}{lccc}
    Category & Model & \small{TRE (mm)} ($\downarrow$) & $|\text{det}(J_\phi) < 0|$ ($\downarrow$)\\
    \midrule
    Traditional & SyN & 1.35 & 0.0034\% \\
    \midrule
    \multirow{3}{*}{Learning} & VoxelMorph & 1.33 &  0.0011\% \\
     & SYMNet & 1.27 & 0.0006\% \\
     & LapIRN & 1.21 & \textbf{0.0001}\%\\
     & DiffuseMorph & 1.35 & 0.0010\% \\
    \midrule
    \multirow{2}{*}{Optimization} & IDIR & 1.13 & 0.0015\% \\
     & NODEO & 1.08 & 0.0008\% \\
    \midrule
    Our Method & SGDIR & \textbf{0.98} & 0.0003\% \\
    \bottomrule
  \end{tabular}}
\end{table}

Quantitative results reveal that SGDIR is capable of producing accurate deformation with less than $1mm$ average error, demonstrating competitive performance with respect to instance optimization-based methods. Also, Figure \ref{fig:comparisons_dir} illustrates the visual comparison between SGDIR, IDIR, NODEO, and VoxelMorph over the DIRLAB dataset. The visual results show that the SGDIR demonstrates a highly precise and smooth deformation field over the landmark points inside the lungs.

Even though the experiment is conducted over a rather small dataset, it corroborates the utility of SGDIR in providing smooth and accurate registrations for other modalities other than MRI.

\subsection{More Visual Results}
\label{app:visuals}

In this section we provide more qualitative results illustrating the visual behavior of SGDIR and the comparisons of SGDIR with respect to other methods. Figure \ref{fig:timsteps} illustrates the evaluation of SGDIR for various number of sampled time steps for the subject ids 1 and 2 of the OASIS dataset. In this figure, both forward and backward deformations along with their warped images are illustrated. The results reveal that a continuous sampling of the time steps is highly in favor of the performance of the model both in terms of Dice score and folding percentage. 

\begin{figure*}[!ht]
    \centering
    \includegraphics[width=0.7\textwidth]{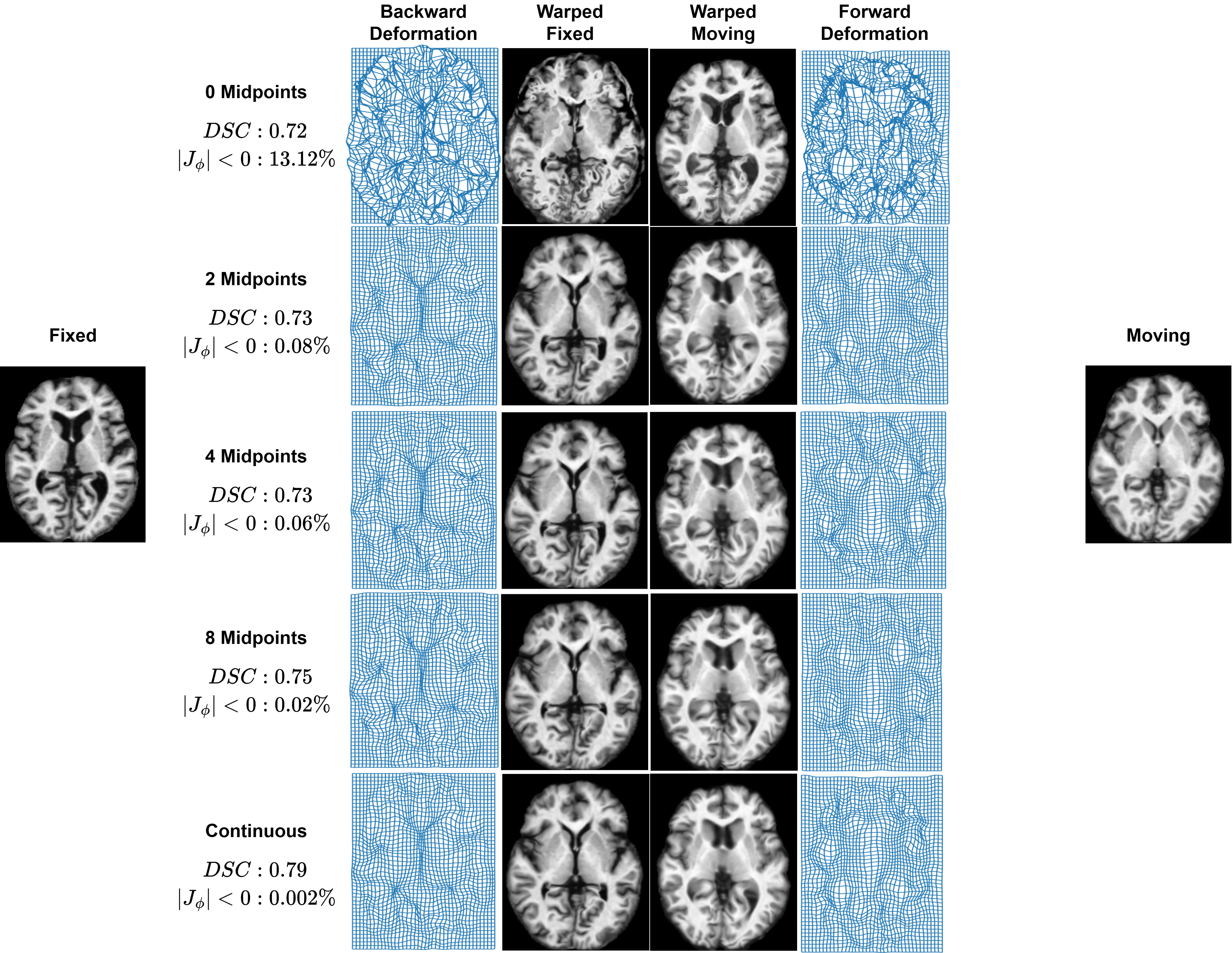}
    \caption{The effect of number of intermediate time steps on the smoothness (percentage of negative Jacobian determinants) and the Dice score. }
    \label{fig:timsteps}
\end{figure*}

Figures \ref{fig:comparisons2} and \ref{fig:comparisons3} depict visual comparisons of SGDIR with other methods over the CANDI and LPBA40 datasets, respectively. Also, Figure \ref{fig:comparisons_ixi} illustrates the qualitative comparisons of SGDIR with other methods over the IXI dataset. 

\begin{figure}[!ht]
    \centering
    \includegraphics[width=\textwidth]{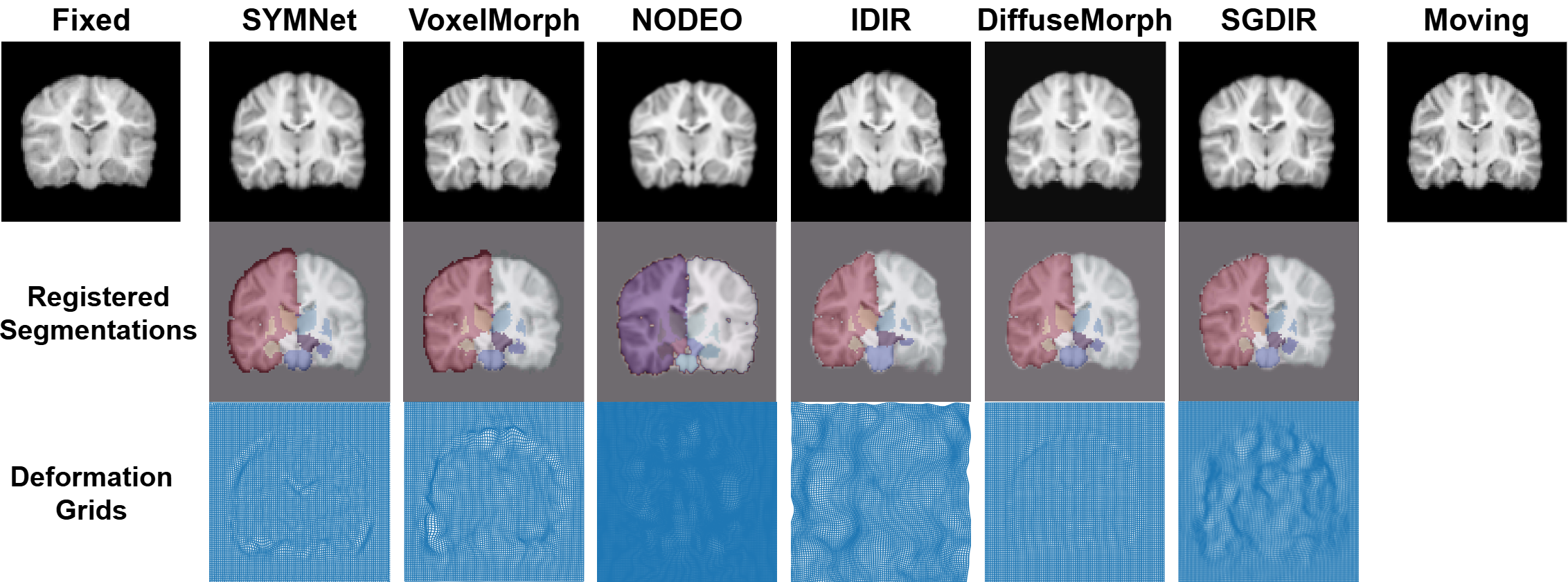}
    \caption{Qualitative comparison of SGDIR with learning-based and optimization-based methods including SYMNet, VoxelMorph, NODEO, IDIR, and DiffuseMorph over the CANDI dataset.}
    \label{fig:comparisons2}
\end{figure}

\begin{figure}[!ht]
    \centering
    \includegraphics[width=0.75\textwidth]{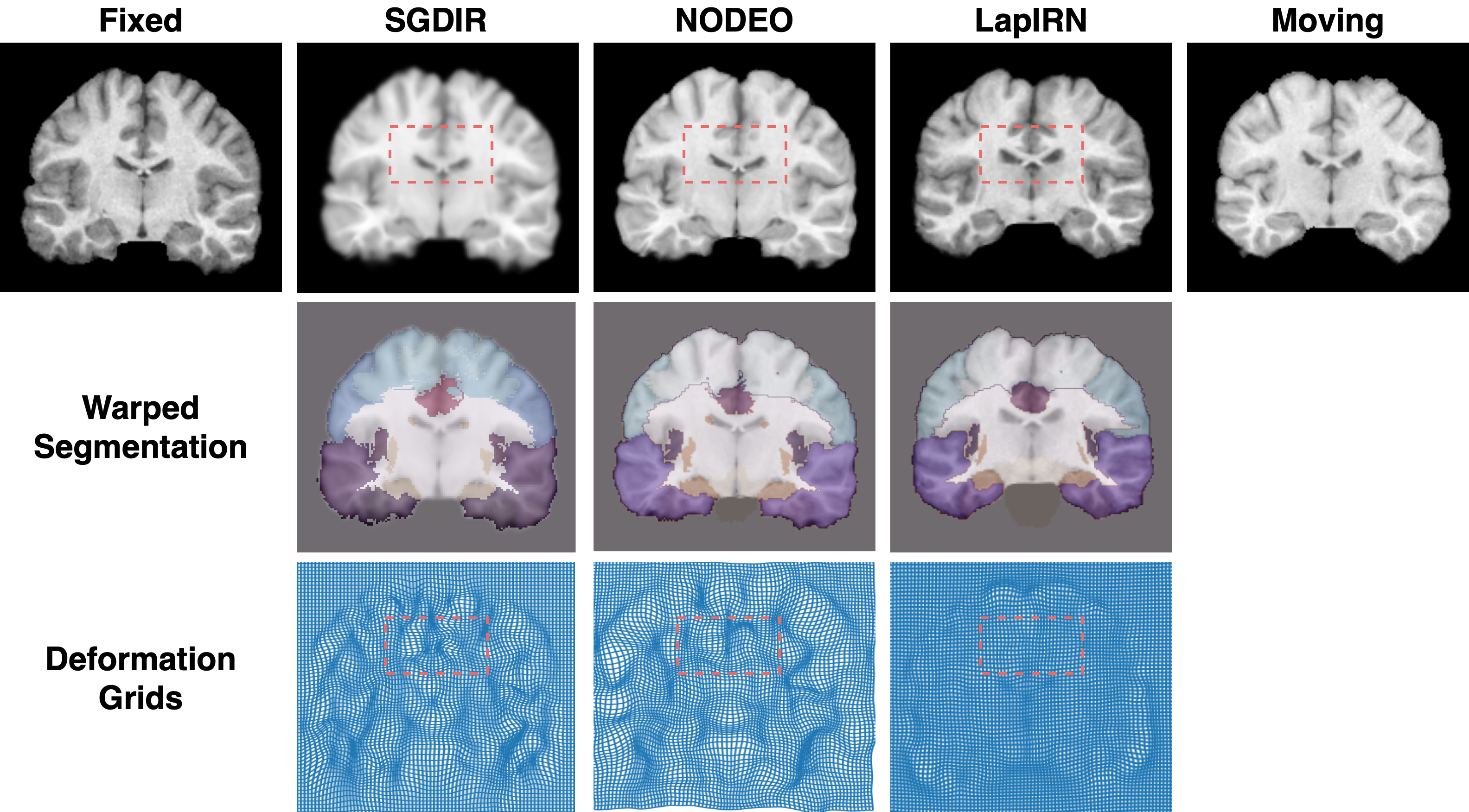}
    \caption{Qualitative comparison of SGDIR with learning-based and optimization-based methods including NODEO, LapIRN over the LPBA40 dataset.}
    \label{fig:comparisons3}
\end{figure}

To better showcase the advantage of SGDIR with respect to non-diffeomorphic methods, we have provided a visual comparison of SGDIR with H-ViT as a successful recent non-diffeomorphic deformable method. Figure \ref{fig:hvitsgdir} illustrates the comparison between the behavior of SGDIR as a diffeomorphic method with H-ViT. This visual comparison shows how SGDIR can output significantly smoother deformation grids (with orders of magnitude better lower percentage of negative Jacobian determinant percentage) while keeping visual quality, which is a crucial requirement in medical image analysis. 

\begin{figure}[!ht]
    \centering
    \includegraphics[width=0.6\textwidth]{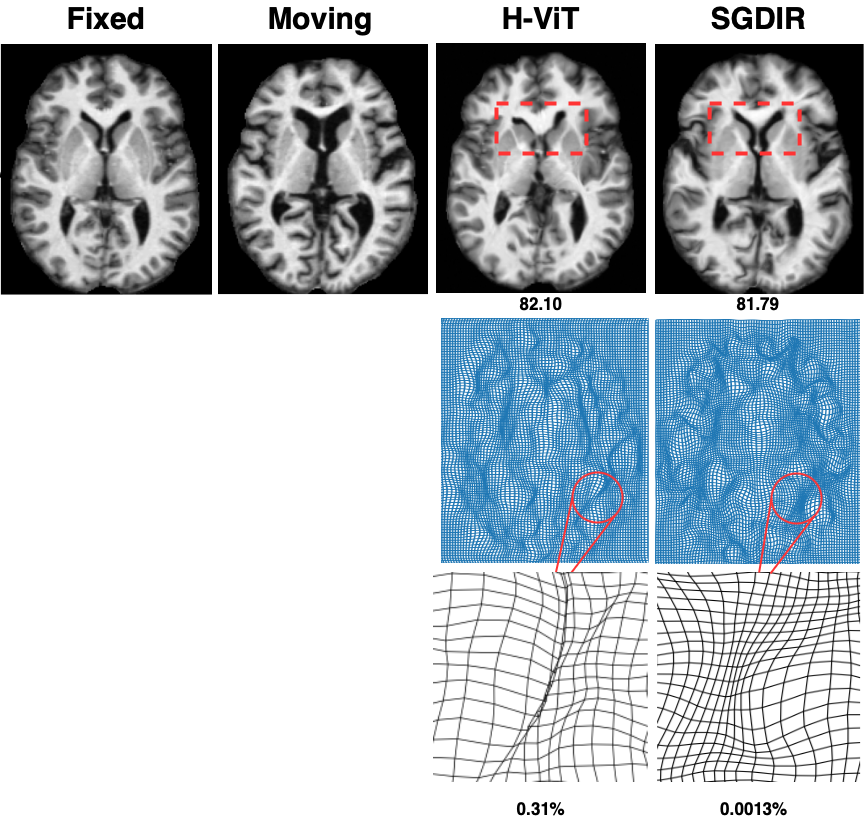}
    \caption{Qualitative comparison of H-ViT and SGDIR on a pair of samples in OASIS dataset. First row shows the warped images under each model. Below the warped images the corresponding dice scores are presented. The second and third rows show the deformation grids and their zoomed in versions, respectively. Below the third row, the percentage of negative Jacobian determinants are presented. Even though H-ViT demonstrates higher dice score, it suffers from high folding percentage which is crucial for medical downstream tasks.}
    \label{fig:hvitsgdir}
\end{figure}

\begin{figure*}[!ht]
    \centering
    \includegraphics[width=\textwidth]{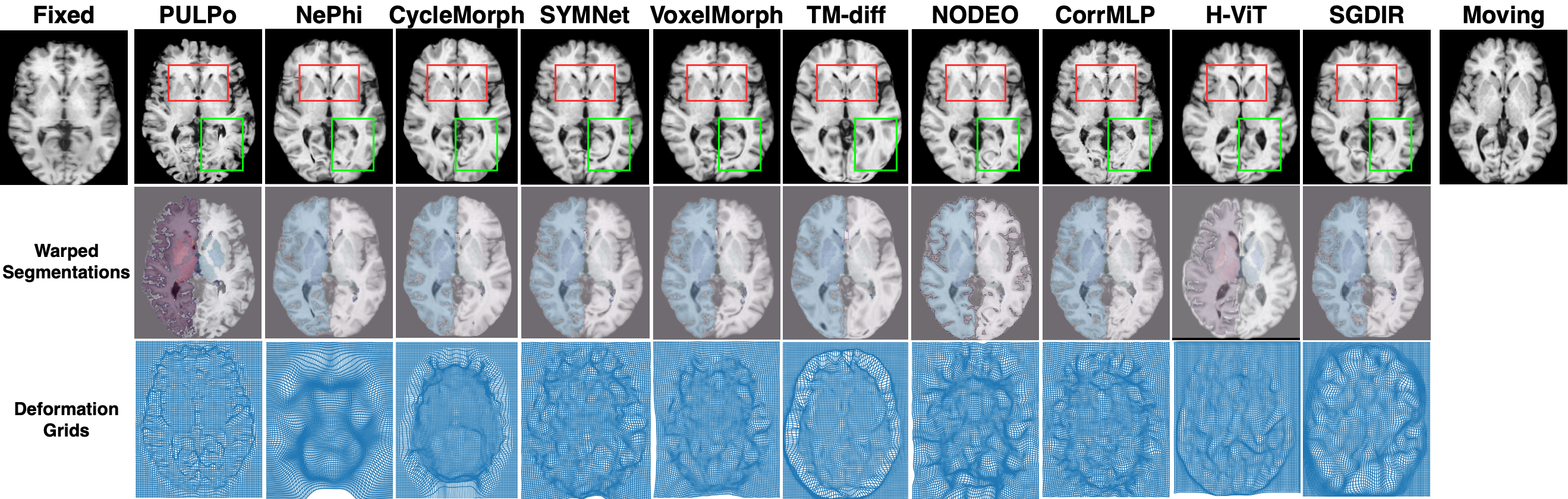}
    \caption{Qualitative comparison of SGDIR with other methods over the IXI dataset.}
    \label{fig:comparisons_ixi}
\end{figure*}

\begin{figure*}[!ht]
    \centering
    \includegraphics[width=0.8\textwidth]{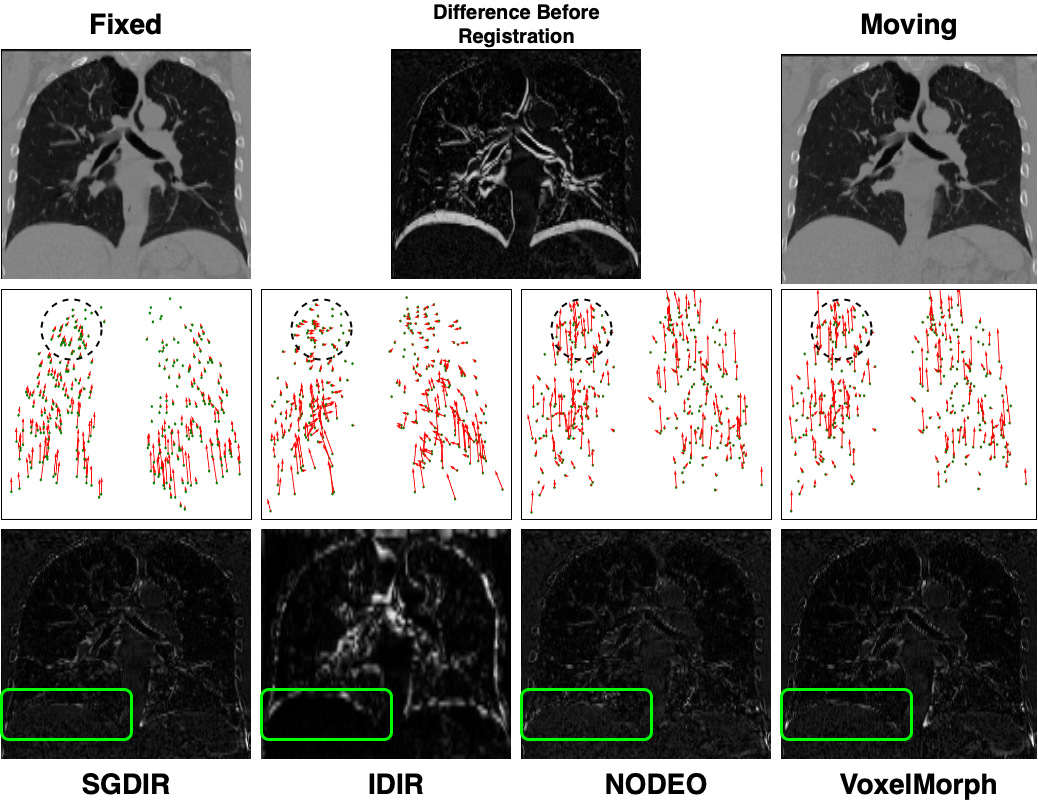}
    \caption{Qualitative comparison of SGDIR with other methods over the DIRLAB lung dataset. The first row depicts the fixed and moving images along with their absolute difference before registration. The second row illustrates the deformation fields computed over the landmark points by each model. The third row shows the difference of warped moving image and the fixed image after the registration by each model.}
    \label{fig:comparisons_dir}
\end{figure*}

Finally, Figure \ref{fig:boxplots} draws the boxplot distribution of dice scores in each anatomical region for the OASIS (top), CANDI (middle), and IXI (bottom) datasets.

\begin{figure*}[!t]
    \centering
    \includegraphics[width=\textwidth]{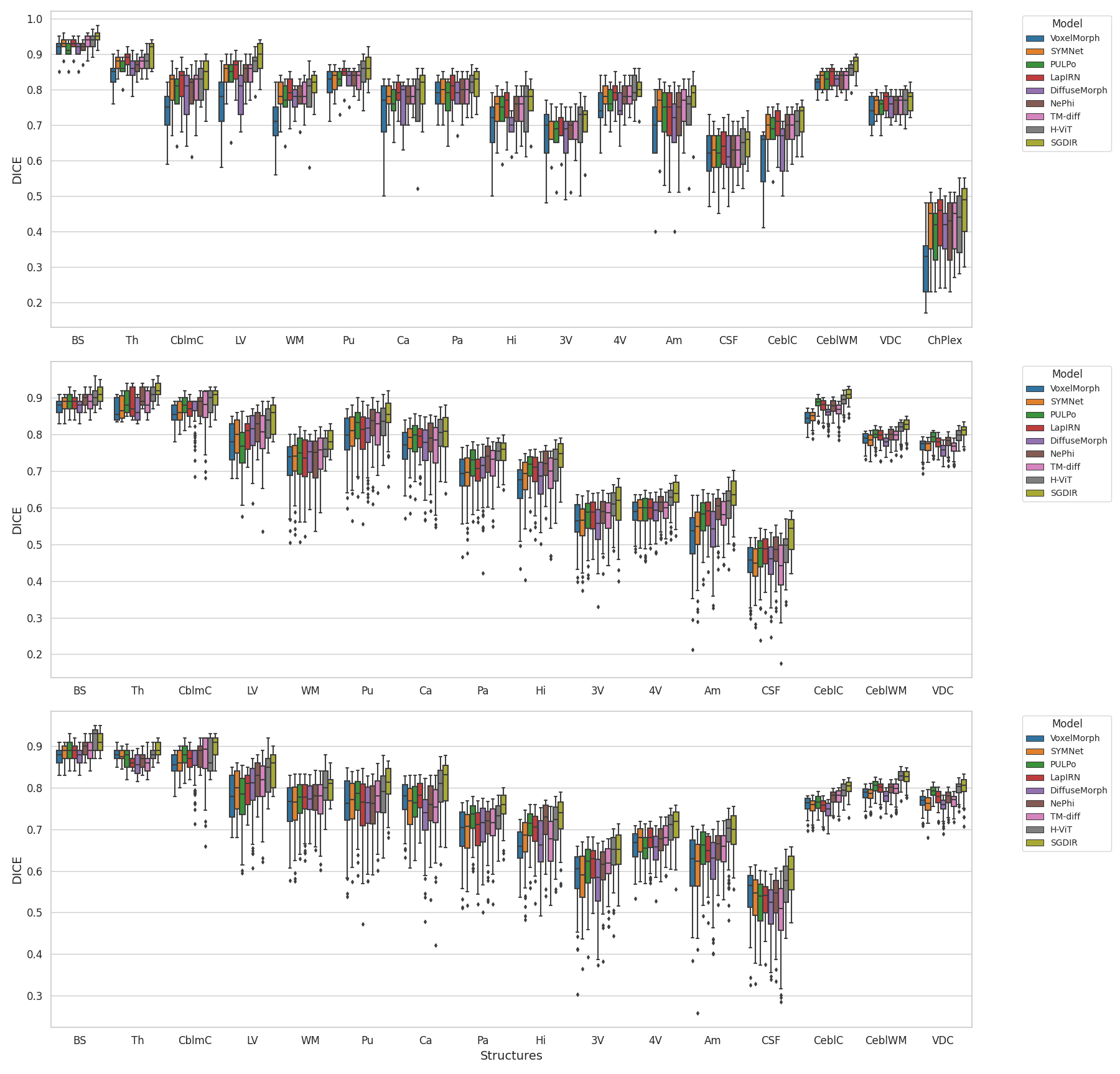}
    \caption{The boxplots of DICE score for each anatomical structure for the OASIS dataset (top), CANDI dataset (middle), and the IXI dataset (bottom). The structures are averaged for left and right hemisphere. The anatomical structures are as follows: Brain Stem (BS), Thalamus (Th), Cerebellum Cortex (ClbmC), Lateral Ventricle (LV), Cerebellum White Matter (WM), Putamen (Pu), Caudate (Ca), Pallidum (Pa), Hippocampus (Hi), 3rd Ventricle (3V), 4th Ventricle (4V), Amygdala (Am), CSF (CSF), Cerebral Cortext (CeblC), Cerebral White Matter (CeblWM), Ventral DC (VDC), Choroid Plexus (ChPlex).}
    \label{fig:boxplots}
\end{figure*}

\end{document}